\documentclass[10pt,twocolumn]{article}

\usepackage[utf8]{inputenc}
\usepackage[T1]{fontenc}
\usepackage{amsmath}
\usepackage{amssymb}
\usepackage{amsthm}
\usepackage{booktabs}
\usepackage{graphicx}
\usepackage{placeins}
\usepackage{hyperref}
\usepackage{geometry}
\usepackage{xcolor}
\usepackage{url}
\hypersetup{colorlinks=true,citecolor=blue,linkcolor=blue,urlcolor=blue}
\graphicspath{{figures/}}
\newtheorem{proposition}{Proposition}
\newtheorem{theorem}{Theorem}

\title{Iterative Erasure Count Is Not an Affine-Invariant Concept Dimension}
\author{%
Tingan Jin$^{1,*}$ \quad
Shuhang Dong$^{1,*}$ \quad
Haosong Li$^{2,*}$ \quad
Chung-Hsien Chou$^{3,\dagger}$\\[0.35em]
\small $^{1}$UCLA \quad $^{2}$Independent Researcher \quad
$^{3}$Cal Poly Pomona\\[0.25em]
\small $^{*}$Equal contribution. $^{\dagger}$Corresponding author.
}
\date{}

\begin{document}
\raggedbottom
\maketitle

\begin{abstract}
How many directions does a neural representation use to encode a concept? A
common operational answer repeatedly erases probe directions and reports the
stopping count or cumulative removed rank. We show that both quantities can
change under an invertible reparameterization that preserves all information, so
neither is intrinsically a concept dimension.

We distinguish model-defined population quantities---generating dimension,
sufficient linear dimension, and minimum guarding rank---from procedure-defined
quantities such as stopping count and cumulative edit rank. In a population
Gaussian construction, an invertible shear preserves the prediction problem and
all three population quantities, yet changes the cumulative Euclidean erasure
count from one to two. The separation holds for Moore--Penrose ordinary least
squares and every finite nonnegative ridge weight. For a two-output full-QR
procedure matching the algebra of our motivating video analysis, cumulative edit
rank similarly changes from two to the ambient dimension four.

Conversely, the complete cumulative metric-QR trajectory is affine-equivariant
when its positive-definite metric, probe, regularizer, and tie-breaking are
transported consistently; exact covariance is one corollary, not a canonical
semantic metric. In a known-rank finite-sample Adam/QR calibration, identity
mixing stops after one accepted update in all 20 large-sample runs, whereas each
tested shear $a\in\{.5,.75,1,1.25,2\}$ accepts at least two updates in all 20
runs. Controlled reparameterizations of frozen
V-JEPA2 features preserve rank-zero predictions yet alter later Euclidean
trajectories under practical optimization. These visual contact experiments are
stress tests, not estimates of contact dimension.

Iterative erasure therefore returns a procedure-relative estimand jointly
determined by representation geometry and the full measurement procedure, not a
semantic dimension by itself.
\end{abstract}

\section{Introduction}

Linear probes establish whether a label is accessible from a representation, but
they do not by themselves determine how that information is organized
\cite{alain2016understanding,hewitt2019designing,belinkov2021probing,
pimentel2020information}. A common next step is iterative null-space projection:
fit a linear classifier, remove its direction, refit, and count directions until
a new classifier fails \cite{ravfogel2020null}. The original INLP paper carefully
frames this as guardedness, but also describes protected-attribute subspaces as
spanning ``dozens to hundreds'' of orthogonal directions. Later applications go
further: recent work on video world models estimates feature dimensionality as
the number of orthogonal probes trainable before chance and interprets tens of
directions as distributed physical variables \cite{joseph2026physics}. Thus the
target of our critique is not merely reporting an algorithm output. It is using
that output to identify intrinsic dimensionality or distributedness. This does
not erase separate evidence from tuning geometry or held-out steering: those
experiments can still establish behavior of a declared native-basis subspace.
Our claim is specifically that the iterative count cannot supply a
coordinate-free dimension for that evidence.

That interpretation conflates several distinct quantities. A label may be
generated from one latent scalar while remaining predictable from many observed
coordinates because an invertible mixing matrix correlates those coordinates.
Conversely, a rank-one empirical cross-covariance can be canceled in one affine
edit without proving that one population-sufficient direction has been found.
The erasure path also depends on the feature metric, probe loss, regularizer,
covariance estimate, and stopping rule. These are properties of an analysis
procedure, not of a semantic concept alone.

Closed-form work already makes covariance central. Mean Projection and LEACE can
guard linear first-moment access with less collateral change than iterative
null-space projection
\cite{haghighatkhah2022single,belrose2023leace,dobrzeniecka2025improving}. We do
not propose another eraser, and one-step Euclidean non-equivariance is elementary.
Gauge-freedom work likewise shows that invertible reparameterizations preserve a
network function while changing Euclidean representation geometry
\cite{cain2026gauge}. Our sharper contribution is the discrete identification
result those observations do not provide: an exact integer-count change with
fixed sufficient and guarding rank, its full-QR counterpart, and an equivariance
theorem for the complete transported-metric trajectory. A finite reproduction of
the published QR stopping protocol and a map-specific fresh-attacker audit bridge
those results to practice. We ask:
\emph{what, if anything, can an iterative erasure count identify about concept
dimension?} Our answer is negative unless a metric and estimand are fixed in
advance.

We combine a population construction with a visual case study. The construction
has a known rank-one sufficient concept, independent nuisance variables, and a
family of invertible mixings with controlled condition number. It separates
coordinate-induced Euclidean redundancy from the known population guard. The
case study uses hand--object contact, a useful predicate for manipulation and
action understanding \cite{shan2020understanding,nguyen2026detecting}, in frozen
V-JEPA2 and DINOv2 representations \cite{mido2025vjepa,oquab2023dinov2}. Contact
is not offered as a pure semantic primitive: 100DOH contains object-configuration
shortcuts, and TouchMoment includes approach and action phase. Those limitations
make it a demanding audit target rather than evidence for a universal contact
mechanism.

The visual study is deliberately a case study rather than a confirmatory contact
claim. Validation-only layer sweeps select the frozen representations to audit;
all coordinate-stress conclusions are reported as conditional on those choices.
Optional native-basis channel analyses are confined to the supplement and do not
support the central identification result.

Our contributions are:
\begin{enumerate}
  \item We prove count non-identification for population ridge probes and give a
  full-QR two-output construction with cumulative edit rank two or four despite
  fixed sufficient dimension and guarding rank. We then reproduce the motivating
  finite Adam/MSE/QR recurrence and show held-out count changes on the same circular
  targets under invertible shears.
  \item We prove affine equivariance of the complete trajectory for any
  covariantly transported positive-definite metric, recover exact covariance as a
  corollary, and delimit the result under rank-deficient probe fitting.
  \item We apply predeclared anisotropic and orthogonal maps to unchanged V-JEPA2
  features and show that the trajectory changes under map-specific, per-rank fresh
  attackers trained and evaluated on source-disjoint official-training roles.
  \item We separate population dimension, moment rank, guarding rank, iteration
  count, and cumulative edit rank in a reproducible audit with no new official-test
  claims.
\end{enumerate}

\section{Related Work}

\paragraph{Probing and intervention.}
Linear probes reveal accessible information but can exploit confounds and do not
by themselves establish causal use. Probe conclusions depend on the attacker and
complexity control \cite{hewitt2019designing,belinkov2021probing,
pimentel2020information}. INLP repeatedly projects out classifier directions
\cite{ravfogel2020null}.
Amnesic probing uses the resulting cumulative subspace as a counterfactual
intervention and controls downstream effects against removal of the same number
of random directions \cite{elazar2021amnesic}. In high-dimensional NLI,
Rozanova et al. show that a small removed subspace relative to ambient dimension
and high-variance random-direction controls can obscure amnesic conclusions,
motivating a complementary mnestic analysis \cite{rozanova2023interventional}.
Mean Projection (MP) and LEACE instead provide one-shot lower-rank or
minimum-distortion guards
\cite{haghighatkhah2022single,belrose2023leace,dobrzeniecka2025improving}.
Haghighatkhah et al. find that after a targeted projection achieves linear
guarding, further INLP effects can resemble additional random projections;
later comparisons likewise document extra removed dimensions and collateral
change \cite{haghighatkhah2022single,dobrzeniecka2025improving}. These studies
use removed-direction counts to size interventions or controls, not to define an
affine-invariant concept dimension. LEACE characterizes linear
guardedness through feature--concept cross-covariance and provides a closed-form
minimum-distortion map with explicit affine structure. Spectral Attribute Removal
(SAL) removes the left singular subspace of feature--concept cross-covariance. For
a scalar binary concept, its full-rank subspace is the one-dimensional
mean-difference subspace \cite{shao2023sal,belrose2023leace}. Our novelty is not a covariance-aware
eraser or Proposition 1 alone. LEACE supplies a one-shot minimum-distortion guard.
We instead give an exact integer-count change, extend it to full-QR multivariate
removal, characterize equivariance of the complete transported-metric trajectory,
and separate residual access caused by finite estimation from population rank.
Exact covariance is one corollary of the metric theorem, not a new eraser. Minimax
erasure such as RLACE
answers a different, explicitly attacker-relative optimization problem
\cite{ravfogel2022rlace}. It reinforces rather than removes the need to declare
the estimand.

\paragraph{Concept geometry and non-identifiability.}
Concept Activation Vectors operationalize human concepts as feature-space
directions in vision models \cite{kim2018tcav}. Such directions can be useful in a
fixed representation, but an invertible map preserves information and all linear
scores while changing Euclidean angles. This is one instance of a broader
identification problem: representation factors and their geometry are not
recoverable from observational fit without assumptions or inductive bias
\cite{locatello2019challenging}. Under stated diversity conditions, Roeder et al.
prove that a broad discriminative model family is identifiable only up to
invertible linear transformations \cite{roeder2021linear}. Thus a quantity claimed
to be intrinsic across equivalent learned representations must either survive
this ambiguity or declare additional geometry. Cain formalizes the related
gauge-freedom view and
shows that Euclidean similarity can change under function-preserving invertible
maps \cite{cain2026gauge}. Our claim is narrower and discrete: for a specified
iterative eraser, the exact integer population stopping count changes while both
sufficient dimension and minimum guarding rank remain fixed.

\begin{table*}[t]
\caption{Claim boundary and novelty. The paper revises a dimensionality
interpretation; it does not invalidate every result produced alongside an erasure
count.}
\label{tab:claim-boundary}
\centering
\scriptsize
\begin{tabular}{@{}p{0.14\textwidth}p{0.24\textwidth}p{0.27\textwidth}p{0.24\textwidth}@{}}
\toprule
Prior object & What remains valid & What must be reinterpreted & What is new here \\
\midrule
INLP \cite{ravfogel2020null} & Linear guardedness and the edited representation under a declared native-basis procedure & Iteration count is not thereby an intrinsic protected-attribute dimension & Exact count-one/count-two separation with fixed sufficient and guarding rank \\
MP, SAL, LEACE \cite{haghighatkhah2022single,shao2023sal,belrose2023leace} & One-shot first-moment or affine-linear guarding with an explicit distortion geometry & Covariance is a chosen geometry, not a universally semantic one & Complete transported-metric trajectory theorem and rank-deficient scope \\
Gauge freedom \cite{cain2026gauge} & Euclidean similarities can change under function-preserving invertible maps & General metric dependence alone does not state that a discrete stopping count changes & Scalar and full-QR integer-count counterexamples \\
Video direction study \cite{joseph2026physics} & Layerwise access, tuning structure, and held-out steering remain native-basis evidence & Orthogonal-probe count supports only procedure-relative, not affine-invariant, distributedness & Matched circular Adam/MSE/QR finite stopping stress test \\
\bottomrule
\end{tabular}
\end{table*}

\paragraph{Frozen visual representations.}
Predictive and masked objectives learn transferable image and video features
\cite{wei2022masked,assran2023selfsupervised,bardes2024revisiting,mido2025vjepa}.
V-JEPA2 predicts latent video targets, while DINOv2 provides a strong image-only
comparison \cite{mido2025vjepa,oquab2023dinov2}. Prior egocentric work studies
actions, objects, and hand state \cite{ragusa2023stillfast,shiota2024egocentric,
yue2023egopca}. We study the stability of a geometric interpretation rather than
claiming a new contact detector.

\section{Rank Estimands and Affine Dependence}

\subsection{Five Quantities Called ``Dimension''}

Let $X\in\mathbb{R}^d$ be a representation and $Y$ a concept label. Table
\ref{tab:estimands} separates quantities that need not agree. Generating and
sufficient dimensions require assumptions about a latent population model.
Cross-covariance rank is a first-moment property. For a scalar encoding
$\phi(Y)$, it is at most one.
Iterative erasure count is the number of noncollapsed sequential probe updates
before a declared stopping rule fires; cumulative edit rank is
$\operatorname{rank}(I-T_k)$ after update $k$. When $T_k$ is an idempotent
projector, this equals the dimension of its removed subspace. For a nonprojector
recurrence it is only the rank of the cumulative edit operator, not a removed-
subspace dimension. Count and edit rank coincide only when every update adds one
independent direction. Linear guarding rank is the minimum
intervention rank needed to satisfy a specified guardedness criterion.
Formally, for a declared family of label distributions, define sufficient linear
dimension as the minimum $r$ for which some $B\in\mathbb{R}^{d\times r}$ satisfies
$Y\perp X\mid X B$. For a declared attacker class $\mathcal F$, a map $T$ is
$\mathcal F$-guarding at risk $R_0$ when
$\inf_{f\in\mathcal F}R(f(XT),Y)\geq R_0$. Linear guarding rank is the minimum
$\operatorname{rank}(I-T)$ satisfying that criterion. These definitions depend on
the population law, attacker class, loss, and baseline risk.

\begin{table}[t]
\caption{Distinct estimands that should not be reported under one name.}
\label{tab:estimands}
\centering
\scriptsize
\begin{tabular}{@{}p{0.24\columnwidth}p{0.70\columnwidth}@{}}
\toprule
Quantity & Definition or dependency \\
\midrule
Generating dimension & Number of latent variables used by the data-generating label rule \\
Sufficient linear dimension & Minimum population linear subspace sufficient for $Y$ under a stated model \\
Cross-covariance rank & $\operatorname{rank}\operatorname{Cov}(X,\phi(Y))$ for a declared encoding $\phi$, capturing only a first moment \\
Iterative erasure count & Stopping time; report cumulative edit rank separately for multivariate updates \\
Linear guarding rank & Minimum linear edit rank subject to a stated guardedness condition \\
\bottomrule
\end{tabular}
\end{table}

Guardedness itself also needs a declared strength. Zero empirical
$\widehat{\operatorname{Cov}}(X,Y)$, or equal empirical class means for binary
$Y$, is a sample first-moment condition. Failure of one fitted logistic or SVM
attacker is objective- and sample-relative. Failure of every affine linear
classifier is a class-wide property, while statistical independence of edited
features and $Y$ is stronger still. Mean Projection, SAL, and LEACE primarily
target first-moment or affine-linear guardedness. Our population proposition uses
full independence. We treat finite-attacker failure only as evidence about the
stated protocol, never as proof of independence.

\begin{figure}[t]
\centering
\fbox{\begin{minipage}{0.93\columnwidth}
\small
\textbf{Algorithm 1: Declared cumulative metric-QR erasure.}
Given centered features $X$, $G\succ0$ fixed from the intact representation, a probe rule, and a stopping rule,
set $U_0=[\,]$ and $T_0=I$. For $k=1,2,\ldots$:
\begin{enumerate}
\item Fit coefficient block $B_k=\operatorname{Probe}(XT_{k-1},Y)$.
\item Residualize $B_k$ against $U_{k-1}$ and metric-QR it to a
$G$-orthonormal new block $Q_k$.
\item If $Q_k$ is empty or the declared held-out rule fires, stop; otherwise set
$U_k=[U_{k-1},Q_k]$ and $T_k=I-U_kU_k^\top G$.
\item Record iteration count and $\operatorname{rank}(I-T_k)$ separately.
\end{enumerate}
Every metric, probe, regularizer, residualization rule, numerical tolerance, and
stopping rule is part of the estimand. The metric is not recomputed after editing.
\end{minipage}}
\par\smallskip
\footnotesize The transported-metric theorem covers this cumulative procedure.
The finite circular reproduction instead follows the motivating paper's literal
sequential projector recurrence and is labeled separately.
\end{figure}

\subsection{Euclidean Erasure Is Not Affine Invariant}

For a row feature $x$ and invertible $A$, write $z=xA$. A coefficient $w$ in the
original coordinates has equivalent transformed coefficient $w_z=A^{-1}w$ because
$zw_z=xw$. Projecting Euclideanly in transformed coordinates and mapping back gives
\begin{equation}
  x' = xA\left(I-\frac{w_z w_z^\top}{w_z^\top w_z}\right)A^{-1}.
  \label{eq:affine-projection}
\end{equation}
Equation \ref{eq:affine-projection} generally differs from
$x(I-ww^\top/(w^\top w))$ unless $A$ preserves the relevant Euclidean metric.
Thus an invertible reparameterization preserves all pre-edit linear predictions
but changes the erased subspace and every later refitted direction.

\begin{proposition}[Coordinate dependence]
For $d\geq2$ and nonzero $w$, there exists an invertible $A$ for which the
mapped-back Euclidean edit in Equation \ref{eq:affine-projection} differs from the
Euclidean edit in the original coordinates, although $xw=(xA)(A^{-1}w)$ for every
$x$.
\end{proposition}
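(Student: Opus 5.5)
The identity $xw=(xA)(A^{-1}w)$ holds for every invertible $A$ and needs no argument. The whole proof is therefore an explicit construction of $A$ for which the two edit operators differ. Write $P=I-ww^\top/(w^\top w)$ for the original Euclidean projector and $P_A=A\bigl(I-w_zw_z^\top/(w_z^\top w_z)\bigr)A^{-1}$ with $w_z=A^{-1}w$ for the mapped-back edit. Both are idempotent of rank $d-1$, and both annihilate $w$: indeed $P_A w = A(I-\cdot)w_z = 0$. So they can differ only in their range. The range of $P$ is $w^\perp$ in the Euclidean sense. The range of $P_A$ is $\{v: v^\top A^{-\top}A^{-1}w=0\}$, the orthogonal complement of $Mw$ with $M=(AA^\top)^{-1}$. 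Equivalently, $P_A$ is the projector orthogonal in the metric $M$.

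Step one is to reduce the claim to $P_A\neq P$ if and only if $Mw$ is not parallel to $w$. Two projectors with the same kernel $\operatorname{span}(w)$ coincide if and only if their ranges coincide. Both ranges are hyperplanes, namely $w^\perp$ and $(Mw)^\perp$, so they agree exactly when $Mw\in\operatorname{span}(w)$, that is, when $w$ is an eigenvector of $AA^\top$. To avoid any subtlety about operators acting on rows versus columns, I would check the claim through a single explicit vector rather than through ranges.

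Step two is to pick $A$ so that $w$ is not an eigenvector of $AA^\top$. Since $d\ge2$, choose a unit vector $u\perp w$ and set $A=I+wu^\top$, or more simply a symmetric positive definite $A=I+c(wu^\top+uw^\top)$ with small $c>0$ so that $A$ is invertible. Then $AA^\top w$ has a component $2c\|w\|^2u+\dots$ along $u$, so it is not parallel to $w$. A cleaner alternative is to rotate coordinates so that $w=\|w\|e_1$ and take the shear $A=I+e_1e_2^\top$ (or its transpose), which is invertible with determinant $1$, and then compute directly in the $2\times2$ block. I would present this concrete shear and exhibit one row vector $x$, such as $x=e_2^\top$, for which $xP\neq xP_A$. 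This is a two-by-two computation, with the remaining coordinates left fixed.

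There is no real obstacle here. The only care needed is bookkeeping of the row-vector convention ($x$ on the left, $A$ acting on the right), which decides whether $w_z=A^{-1}w$ pairs with $AA^\top$ or $A^\top A$. I would settle this by doing the explicit $2\times2$ computation rather than relying on the abstract metric characterization.
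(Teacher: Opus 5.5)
Your kernel/range argument is correct and takes a somewhat more general route than the paper. The paper sets $w=e_1$ and takes the shear with leading block $\left[\begin{smallmatrix}1&0\\a&1\end{smallmatrix}\right]$, which is your ``transpose'' option when $a=1$. It then observes by direct substitution that the mapped-back edit has leading block $\left[\begin{smallmatrix}0&a/(1+a^2)\\0&1\end{smallmatrix}\right]$ instead of $\left[\begin{smallmatrix}0&0\\0&1\end{smallmatrix}\right]$.

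You instead write the mapped-back edit as $P_A=I-ww^\top M/(w^\top Mw)$ with $M=(AA^\top)^{-1}$, and prove that $P_A=P$ exactly when $w$ is an eigenvector of $AA^\top$. Equality of matrices does not depend on the row/column convention, so this is already a complete proof once you insert any of your three choices of $A$. Your route buys two things:
\begin{enumerate}
\item an exact characterization of which invertible maps leave the edit unchanged, a condition weaker than orthogonality;
\item an identification of the mapped-back Euclidean edit as the one-step metric edit of Proposition~\ref{prop:metric-equivariance} with $G_X=A^{-\top}A^{-1}$, the pullback of $G_Z=I$.
\end{enumerate}
The paper's computation buys only brevity.

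The explicit check you plan to present, however, fails as written. In the row convention of Equation~\ref{eq:affine-projection}, $xP=x-(xw)\,w^\top/(w^\top w)$ and $xP_A=x-(xw)\,(Mw)^\top/(w^\top Mw)$. Hence both edits fix every row vector with $xw=0$, and with $w=e_1$ this includes $x=e_2^\top$. For instance, your shear $A=I+e_1e_2^\top$ gives $P_A=\left[\begin{smallmatrix}0&1\\0&1\end{smallmatrix}\right]$, whose second row coincides with that of $P$.

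Acting on rows, the two edits share their fixed hyperplane and differ only in their kernels: $P$ kills $w^\top$, while $P_A$ kills $(Mw)^\top$. So the witness must satisfy $xw\neq0$. Take $x=e_1^\top$: $P$ sends it to $0$, while $P_A$ sends it to $(0,1)$ for your shear and to $(0,a/(1+a^2))$ for the paper's. Your $e_2$ would work as a column vector acted on from the left, which is exactly the row-versus-column slip you warned about.
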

\begin{proof}
Choose a basis with $w=e_1$ and let the leading $2\times2$ block of $A$ be
$\left[\begin{smallmatrix}1&0\\a&1\end{smallmatrix}\right]$ with $a\neq0$. Then
$A^{-1}w=(1,-a)^\top$, and direct substitution into Equation
\ref{eq:affine-projection} produces off-diagonal terms absent from
$I-e_1e_1^\top$. Score equivalence follows from invertibility. Rotating this basis
handles arbitrary nonzero $w$.
\end{proof}

This observation also clarifies the regularization confound. Fitting an isotropic
$\ell_2$ probe independently in $x$ and $z$ changes the prior. The transformed
penalty equivalent to $\|w\|_2^2$ is quadratic, not a newly selected scalar
$C\|w_z\|_2^2$. In our metric comparison, every attacker is therefore fit in the
same train-standardized original coordinates with one validation-selected $C$.
Whitening is used only to define an intervention, which is then mapped back. Rank
zero agrees to numerical precision (maximum absolute discrepancy below
$7\times10^{-14}$).

Here $G$ is a positive-definite quadratic form on probe coefficients, equivalently
on the dual representation space. Because coefficients transform as
$w_Z=A^{-1}w_X$, preserving $w^\top G w$ requires the congruence
$G_Z=A^\top G_XA$. A metric acting instead on primal row features would obey the
inverse-congruence rule.

\begin{proposition}[Complete transported-metric affine equivariance]
\label{prop:metric-equivariance}
Let $G_X\succ0$ be any coefficient metric fixed from the intact representation. At iteration $k$,
let the probe return a coefficient block $B_k$ (one column for a scalar target),
append its metric-orthogonalized column space to a $G_X$-orthonormal basis $U_k$,
and edit by $T_k=I-U_kU_k^\top G_X$. For $Z=XA$ with invertible $A$, transport
the metric as $G_Z=A^\top G_XA$. If the probe is equivariant
($B_k^Z=A^{-1}B_k$), regularization is transported under $A$, and any
non-subspace tie is resolved equivariantly, then for every $k$,
$Z T_k^Z=(X T_k)A$. Scores, cumulative edit rank, and every score-based stopping
count are identical in both parameterizations.
\end{proposition}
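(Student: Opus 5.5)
The argument is an induction on $k$. The inductive hypothesis is that the transported quantities satisfy $U_k^Z=A^{-1}U_k$ and $T_k^Z=A^{-1}T_kA$. The second relation gives $ZT_k^Z=XA\,A^{-1}T_kA=(XT_k)A$. The base case is $U_0=[\,]$, $T_0=I$ in both coordinates, and $A^{-1}IA=I$. The whole proof rests on two algebraic facts.

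\textbf{Fact 1.} If $U$ is $G_X$-orthonormal, then $A^{-1}U$ is $G_Z$-orthonormal, because
$(A^{-1}U)^\top A^\top G_XA(A^{-1}U)=U^\top G_XU=I$.

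\textbf{Fact 2.} The edit operator transports by conjugation:
\[
I-(A^{-1}U)(A^{-1}U)^\top A^\top G_XA=I-A^{-1}UU^\top G_XA=A^{-1}(I-UU^\top G_X)A .
\]

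For the inductive step, assume the hypothesis at step $k-1$. The edited features then satisfy $Z_{k-1}:=ZT_{k-1}^Z=(XT_{k-1})A$. This is exactly the relation under which the probe is assumed equivariant, so $B_k^Z=A^{-1}B_k$. Here I would spell out that "regularization transported under $A$" means the penalty $\Omega_Z(w)=\Omega_X(Aw)$. Under this convention the objective in $Z$-coordinates at $A^{-1}w$ equals the objective in $X$-coordinates at $w$. Minimizer sets therefore correspond bijectively, and equivariant tie-breaking selects corresponding points.

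Next comes residualization. The $G_Z$-projection of $A^{-1}B_k$ off $A^{-1}U_{k-1}$ is
\[
A^{-1}B_k-A^{-1}U_{k-1}U_{k-1}^\top A^\top G_XA\,A^{-1}B_k=A^{-1}\bigl(B_k-U_{k-1}U_{k-1}^\top G_XB_k\bigr),
\]
which is the transport of the $X$-residual. Metric Gram--Schmidt (metric-QR) uses only the inner products $\langle u,v\rangle_{G}$, and these are preserved by Fact 1. The same triangular coefficients therefore arise, giving $Q_k^Z=A^{-1}Q_k$. Numerical rank decisions depend on the same Gram quantities, so emptiness of $Q_k$ is decided identically.

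Any non-uniqueness that is not determined by the subspace, such as column-sign or ordering ties in QR, is covered by the equivariant tie-breaking hypothesis. Non-uniqueness within the subspace is harmless, since $T_k$ depends only on $\operatorname{span}_G U_k$ through the $G$-orthogonal projector. Appending gives $U_k^Z=A^{-1}U_k$, and Fact 2 then gives $T_k^Z=A^{-1}T_kA$, which closes the induction.

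The consequences follow directly. Scores at step $k$ agree because $Z_{k-1}B_k^Z=(XT_{k-1})A\,A^{-1}B_k=X_{k-1}B_k$. Fitted predictions on train and held-out data are thus identical at every step. Any stopping rule that is a function of scores and labels fires at the same step. Cumulative edit rank is preserved because $I-T_k^Z=A^{-1}(I-T_k)A$ is similar to $I-T_k$.

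The main obstacle is making the probe-equivariance and tie-breaking hypotheses precise enough to carry the induction. In particular, I must show that "transported regularization" indeed yields $B_k^Z=A^{-1}B_k$ when the probe is a regularized empirical-risk minimizer. I also need rank-deficient fits to be handled by a transported tie-break, for example a minimum-$\Omega$ solution, rather than by Euclidean Moore--Penrose. I would state the hypothesis as an invariance of the objective and its selection rule, then verify it for ridge with penalty $w^\top Pw$ transported to $A^\top PA$.
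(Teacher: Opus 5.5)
Your proof is correct and follows the paper's argument. Both proceed by induction on $k$, carrying $U_k^Z=A^{-1}U_k$, and both rely on the conjugation identity $T_k^Z=A^{-1}T_kA$ (the paper's $AT_k^Z=T_kA$), probe equivariance on the corresponding edited features, and preservation of $G$-inner products under metric Gram--Schmidt. Your version spells out more of the detail: the residualization step, the penalty convention $\Omega_Z(w)=\Omega_X(Aw)$, and the similarity argument for cumulative edit rank.

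One small slip: in your residualization display, the transpose of $A^{-1}U_{k-1}$ should contribute $U_{k-1}^\top A^{-\top}$, so the middle term should read $A^{-1}U_{k-1}U_{k-1}^\top A^{-\top}A^\top G_XA\,A^{-1}B_k$. The right-hand side you state is nonetheless correct.
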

\begin{proof}
If $U_k^Z=A^{-1}U_k$, then
$(U_k^Z)^\top G_ZU_k^Z=U_k^\top G_X U_k=I$ and
\begin{align}
 A T_k^Z
 &=A\left[I-A^{-1}U_kU_k^\top A^{-\top}A^\top G_X A\right] \\
 &=\left[I-U_kU_k^\top G_X\right]A=T_kA.
\end{align}
The claim follows by induction: the edited features correspond by $A$, probe
equivariance maps the next coefficient block by $A^{-1}$, and metric
Gram--Schmidt preserves the basis relation. Transporting an original penalty
$\|w\|_2^2$ means using $w_Z^\top A^\top A w_Z$, not an isotropic penalty in
$Z$.
\end{proof}

Exact covariance is a natural corollary: setting $G_X=\Sigma_X$ gives
$G_Z=\Sigma_Z=A^\top\Sigma_XA$. Proposition~\ref{prop:metric-equivariance} does
not privilege covariance as a semantic geometry; any fixed positive-definite
metric transported by the same tensor law obeys the result. OAS shrinkage toward
identity \cite{chen2010shrinkage} and eigenvalue flooring generally break it because their covariance maps
$f$ do not satisfy $f(A^\top\Sigma A)=A^\top f(\Sigma)A$.

The probe assumption needs care after the first edit, when ambient features are
rank deficient. Strictly convex least-squares or logistic objectives with an
unpenalized intercept and a positive-definite quadratic penalty transported by
$A$ retain a unique equivariant coefficient even after rank loss. Unregularized
OLS is equivariant at a full-rank intact iteration. In contrast, rank-deficient
Moore--Penrose OLS selects the minimum \emph{Euclidean}-norm ambient coefficient;
that selector is generally not equivariant under a nonorthogonal $A$ and is not
covered by Proposition~\ref{prop:metric-equivariance}. Theorem
\ref{thm:full-qr-count} uses Moore--Penrose only to define one displayed-coordinate
procedure and makes no affine-equivariance claim. Our empirical unshrunk-covariance
check instead uses a strictly regularized probe after mapping every edited feature
back to the common original coordinates, so no independent pseudoinverse selector
is compared across parameterizations.

For a multivariate probe, removing the entire column space of $B_k$ makes the
update independent of the particular QR basis. Extracting one vector from a tied
singular subspace is not basis-independent and requires an additional declared
rule. Proposition~\ref{prop:metric-equivariance} does not make that choice
canonical.

The statement extends from $Z=XA$ to a full affine change $Z=XA+b$ by tracking
the mean, applying each edit to centered features, and transforming the intercept:
$\mu_Z=\mu_XA+b$ and $c_Z=c_X-bA^{-1}w$. The metric proof is then unchanged.
When an exact covariance metric is singular, the corollary applies only after a common
support subspace has been fixed. A pseudoinverse, eigenvalue floor, or shrinkage
rule adds an estimator-dependent extension outside that support and is not covered
by the equivariance claim.

\subsection{Population Rank-One Construction}

Let $H=(S,N)\sim\mathcal{N}(0,I_d)$, with scalar concept coordinate $S$ and
independent nuisance $N$. Define $Y=\mathbb{1}[S\geq0]$ and observed features
$X=HA$ for invertible $A$. The generating and minimum sufficient linear dimensions
are one. Let $e=(1,0,\ldots,0)^\top$. Moreover,
\begin{equation}
  T_A=A^{-1}(I-ee^\top)A
\end{equation}
has $\operatorname{rank}(I-T_A)=1$, and
$XT_A=H(I-ee^\top)A$ is independent of $Y$. Hence the population minimum guarding
rank is exactly one. Varying $A$ changes no labels, latent variables, or sufficient
dimension.

\begin{proposition}[Known minimum guard]
For the construction above, the minimum rank of a linear edit that makes
the edited representation independent of $Y$ is one.
\end{proposition}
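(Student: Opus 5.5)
The proof has two halves: an upper bound of one, which is already in hand, and a lower bound showing that no rank-zero edit works.

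For the upper bound, I will invoke the construction preceding the statement. $T_A=A^{-1}(I-ee^\top)A$ satisfies $I-T_A=A^{-1}ee^\top A$, so $\operatorname{rank}(I-T_A)=1$ because $A$ is invertible. Moreover $XT_A=HAA^{-1}(I-ee^\top)A=(0,N)A$. This is a measurable function of $N$ alone, and $N$ is independent of $S$, hence of $Y=\mathbb{1}[S\ge0]$. So the minimum guarding rank is at most one.

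For the lower bound, the only rank-zero linear edit is $I-T=0$, that is $T=I$, leaving the representation $X=HA$ unchanged. I will show that $X$ is not independent of $Y$. The cleanest route is a single linear functional. Take $w=A^{-1}e$, so that $Xw=HAA^{-1}e=S$. Then $Y=\mathbb{1}[Xw\ge0]$ is a measurable function of $X$. Since $\Pr(Y=1)=1/2$ is nondegenerate, $Y$ cannot be independent of a variable that determines it: otherwise $\Pr(Y=1)=\Pr(Y=1,Y=1)=\Pr(Y=1)^2$, which forces $\Pr(Y=1)\in\{0,1\}$. Equivalently, $\operatorname{Cov}(Xw,Y)=\mathbb{E}[S\,\mathbb{1}[S\ge0]]=1/\sqrt{2\pi}\neq0$, so even first-moment guardedness fails at rank zero.

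If the paper allows affine edits $x\mapsto xT+c$, the constant $c$ cannot affect independence, so the same dichotomy on $\operatorname{rank}(I-T)$ applies. Combining the two halves gives a minimum of exactly one.

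The main obstacle is definitional rather than technical. I need to fix the meaning of "linear edit" and "rank" as $\operatorname{rank}(I-T)$, and note that rank zero forces $T=I$, so there is no nonidentity rank-zero edit to consider. Once that is settled, both directions are a line each. I will also remark that the conclusion does not depend on $A$, which is the point used later when $A$ is varied.
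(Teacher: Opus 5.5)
Your proof is correct and matches the paper's argument: $T_A$ supplies the rank-one upper bound, and for the lower bound rank zero forces $T=I$, while $S=XA^{-1}e$ keeps the nonconstant $Y=\mathbb{1}[S\geq0]$ a deterministic function of $X$. Your additions, namely the explicit step $\Pr(Y=1)=\Pr(Y=1)^2$ and the nonzero covariance $1/\sqrt{2\pi}$, only spell out what the paper leaves implicit.
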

\begin{proof}
$T_A$ gives a rank-one upper bound and removes $S$ exactly. A rank-zero edit is the
identity. Because $A$ is invertible, $S=XA^{-1}e$ remains a deterministic
linear function of $X$, and $Y=\mathbb{1}[S\geq0]$ is nonconstant. Thus rank zero
cannot guard $Y$, establishing the matching lower bound.
\end{proof}

Non-equivariance of one update does not alone prove that a stopping count changes.
The following population separation does.

\begin{theorem}[Cumulative-QR ridge count non-identification]
\label{thm:ridge-count}
Let $S,N\overset{\mathrm{iid}}{\sim}\mathcal N(0,1)$,
$L=\operatorname{sign}(S)$, and $X_a=(S+aN,N)$ for $a\in\mathbb R$. Maintain a
Euclidean-orthonormal cumulative basis $U_k$ and projector
$P_k=I-U_kU_k^\top$, beginning with $P_0=I$. At iteration $k$, fit population
ridge least squares on $X_aP_{k-1}$, residualize its ambient coefficient as
$q_k=P_{k-1}w_k$, append $q_k/\lVert q_k\rVert_2$ when nonzero, and stop when the
population coefficient is zero. Fix any finite ridge weight $\lambda\geq0$.
At $\lambda=0$, every rank-deficient fit uses the Moore--Penrose minimum-norm
coefficient. For $\lambda>0$, the penalty may be either isotropic in each displayed
coordinate system or transported from $X_0$ under $X_a=X_0A_a$.
The iteration count and cumulative edit rank are one for $a=0$ and two for
every $a\neq0$, although every $X_a$ has sufficient linear dimension one and
minimum linear guarding rank one.
\end{theorem}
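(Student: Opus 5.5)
The proof is an explicit population computation in $d=2$. Write $c=\mathbb E[SL]=\mathbb E|S|=\sqrt{2/\pi}>0$. Then $\mathbb E[NL]=0$, so the feature--label cross-covariance is $\gamma=\mathbb E[X_a^\top L]=(c,0)^\top$ for every $a$. The feature covariance is $\Sigma_a=\left[\begin{smallmatrix}1+a^2&a\\a&1\end{smallmatrix}\right]$. At any iteration the population ridge problem on $X_aP_{k-1}$ is $\min_w \mathbb E(L-X_aP_{k-1}w)^2+\lambda\,w^\top Mw$. Here $M=I$ for the isotropic penalty and $M=A_a^\top A_a$ for the transported one, with $A_a=\left[\begin{smallmatrix}1&0\\a&1\end{smallmatrix}\right]$ so that $X_a=X_0A_a$. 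The normal equations are $(P_{k-1}\Sigma_aP_{k-1}+\lambda M)w=P_{k-1}\gamma$. At $\lambda=0$ on rank-deficient features we take the Moore--Penrose solution. The sufficient-dimension and guarding-rank claims are immediate from the construction and the Known minimum guard proposition, since every $X_a$ is an invertible linear image of $(S,N)$. Only the count needs work.

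\emph{Case $a=0$.} Here $\Sigma_0=I$ and $M=I$ in both penalty variants, so $w_1=\gamma/(1+\lambda)\propto e_1$ and $U_1=[e_1]$. The edited feature is $X_0P_1=(0,N)$, which is independent of $L$, so $P_1\gamma=0$. For $\lambda>0$ the unique minimizer is therefore $w_2=0$. For $\lambda=0$ the Moore--Penrose solution of a zero right-hand side is also $0$. The procedure stops with count one and $\operatorname{rank}(I-P_1)=1$.

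\emph{Case $a\neq0$, first step.} With the isotropic penalty, the adjugate of $\Sigma_a+\lambda I$ gives $w_1\propto(1+\lambda,-a)^\top$. With the transported penalty, equivariance of a strictly convex full-rank ridge fit gives $w_1=A_a^{-1}w_1^{(0)}\propto A_a^{-1}e_1=(1,-a)^\top$. At $\lambda=0$ both variants reduce to $\Sigma_a^{-1}\gamma\propto(1,-a)^\top$. In every variant $w_1$ has a nonzero second coordinate, so $u_1=w_1/\lVert w_1\rVert$ is not parallel to $e_1$. The complementary unit vector $v\perp u_1$ therefore has $v_1\neq0$. The edited feature $X_aP_1=(X_av)v^\top$ has scalar score $t=X_av=v_1S+(av_1+v_2)N$. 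This gives $\operatorname{Cov}(t,L)=v_1c\neq0$, and hence $P_1\gamma=v\,v_1c\neq0$.

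\emph{Case $a\neq0$, second step.} This is the step needing the most care, because the features are now rank one and the selector matters.
\begin{itemize}
\item For $\lambda=0$ and for isotropic ridge, the solution lies in $\operatorname{span}(v)$ and is a nonzero multiple of $v$, because $P_1\gamma\neq0$.
\item For the transported penalty, $w_2$ need not lie in $\operatorname{span}(v)$, so I argue through the objective instead. The gradient at $w=0$ is $-2P_1\gamma\neq0$, so the optimal value is strictly below the value at $0$. Any $w$ with $P_1w=0$ produces a zero score and has objective at least the value at $0$. Hence $q_2=P_1w_2\neq0$.
\end{itemize}
In all variants $q_2\propto v$, so $U_2=[u_1,v]$, $P_2=0$, and the edited features vanish. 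The third fit has zero right-hand side, so its coefficient is $0$ and the procedure stops. The count is two and $\operatorname{rank}(I-P_2)=2$.
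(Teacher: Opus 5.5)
Your proof is correct and follows essentially the paper's route: the same cross-covariance $(c,0)^\top$, the same first-step coefficients $(1+\lambda,-a)^\top$ and $(1,-a)^\top$, the same observation that a first direction not parallel to $e_1$ leaves nonzero cross-covariance on the retained line, and the same conclusion that the second residual exhausts $\mathbb R^2$. The one local difference is the transported-penalty second step: the paper writes $w_2=g(\lambda\Gamma+vrr^\top)^{-1}r$ and uses positive definiteness to get $r^\top w_2\neq0$, whereas you argue variationally (the gradient $-2P_1\gamma$ at $0$ is nonzero, and any $w\in\ker P_1$ does no better than $0$), which establishes $q_2\neq0$ equally rigorously without the explicit formula.
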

\begin{proof}
Writing $c=\mathbb E[S\operatorname{sign}(S)]=\sqrt{2/\pi}$ gives
$\operatorname{Cov}(X_a,L)=c(1,0)^\top$. The ridge coefficient is proportional to
\begin{equation}
 [\operatorname{Cov}(X_a)+\lambda I]^{-1}\operatorname{Cov}(X_a,L)
 \propto(1+\lambda,-a)^\top .
\end{equation}
This is the isotropic-penalty solution. Writing
$A_a=\left[\begin{smallmatrix}1&0\\a&1\end{smallmatrix}\right]$, the transported
penalty is $w_a^\top A_a^\top A_a w_a$. Its solution is proportional to
$A_a^{-1}(1,0)^\top=(1,-a)^\top$. Both solutions are parallel to the
feature--label cross-covariance vector $(1,0)^\top$ only when $a=0$. At
$\lambda=0$, the isotropic and transported solutions themselves coincide for
every $a$. For $a=0$, this common direction equals the cross-covariance direction
and one edit leaves independent $N$. For $a\neq0$, neither fitted direction is
parallel to the cross-covariance vector $(1,0)^\top$, so
the first cumulative Euclidean removal leaves nonzero feature--label
cross-covariance in a retained line spanned by a unit vector $r$. Write
$v=\operatorname{Var}(X_ar)>0$, $g=\operatorname{Cov}(X_ar,L)\neq0$, and let
$\Gamma\succ0$ be either the identity or transported penalty matrix. For
$\lambda>0$, the second ambient coefficient satisfies
\begin{align}
 w_2 &=g(\lambda\Gamma+vrr^\top)^{-1}r, \\
 r^\top w_2 &=g\,r^\top(\lambda\Gamma+vrr^\top)^{-1}r\neq0.
\end{align}
Hence its cumulative residual $q_2=P_1w_2=(r^\top w_2)r$ is nonzero and spans
the retained support. At $\lambda=0$, the Moore--Penrose solution gives the same
conclusion, $q_2=(g/v)r$. Appending this residual exhausts the two-dimensional
representation, so the next coefficient is zero.
The rank-one guard
$(S+aN,N)\mapsto(0,N)$ establishes the common guarding rank. For fixed
$a\neq0$ and finite $\lambda$, the residual coefficient has positive norm, so the
same separation holds for a coefficient-based rule that accepts update $k$ when
$\lVert q_k\rVert_2>\tau$, provided $\tau$ is below the three relevant nonzero
norms $\lVert q_1(a=0)\rVert_2$, $\lVert q_1(a)\rVert_2$, and
$\lVert q_2(a)\rVert_2$. This statement does not cover a score- or risk-based
stopping threshold.
\end{proof}

The exact count jump should not be confused with a uniformly large practical
effect. Its magnitude is explicit. Let
$D_\lambda^2=(1+\lambda)^2+a^2$. Under isotropic ridge, the retained unit line
after the first edit is $r_\lambda=(a,1+\lambda)^\top/D_\lambda$, with
\begin{align}
 g_\lambda&=\operatorname{Cov}(X_ar_\lambda,L)
   =\frac{ca}{D_\lambda}, \\
 v_\lambda&=\operatorname{Var}(X_ar_\lambda)
   =\frac{a^2+(a^2+1+\lambda)^2}{D_\lambda^2}.
\end{align}
Because $\operatorname{Var}(L)=1$, the best retained scalar linear predictor has
\begin{align}
 R_{\mathrm{res}}^2
 &=\frac{g_\lambda^2}{v_\lambda}
 =\frac{(2/\pi)a^2}{a^2+(a^2+1+\lambda)^2}, \\
 \lVert q_2\rVert_2
 &=\frac{|g_\lambda|}{v_\lambda+\lambda}
 =\frac{c|a|D_\lambda}
 {a^2+(a^2+1+\lambda)^2+\lambda D_\lambda^2}.
 \label{eq:shear-residual}
\end{align}
For transported ridge, replace these by
$R_{\mathrm{res}}^2=(2/\pi)a^2/[a^2+(1+a^2)^2]$ and
$\lVert q_2\rVert_2=c|a|\sqrt{1+a^2}/
[a^2+(1+\lambda)(1+a^2)^2]$.
Both are positive for every $a\neq0$ but approach zero continuously as
$a\rightarrow0$; the coefficient norm also shrinks with strong ridge. Thus a
finite score threshold can legitimately return one even where exact population
coefficient stopping returns two. The finite experiment below measures that
bridge rather than hiding it.
The norm in Equation~\ref{eq:shear-residual} is specifically the cumulative
residualized coefficient $q_2$, not the raw ambient coefficient $w_2$.

The cumulative residualization is essential. Literal sequential multiplication
by null projectors of the raw ambient coefficients need not terminate in two
steps under a transported anisotropic penalty and can revisit an earlier support;
that different algorithm is not covered by Theorem~\ref{thm:ridge-count}.

The motivating video analysis fits a two-output regressor and removes the full QR
basis of its weight matrix. The next result matches that algebraic procedure.

\begin{theorem}[Full-QR multivariate count non-identification]
\label{thm:full-qr-count}
Let $S,N\overset{\mathrm{iid}}{\sim}\mathcal N(0,I_r)$, let $Y=S$, and let
$X_a=(S+aN,N)\in\mathbb R^{2r}$. At each iteration, fit population multivariate
least squares, use the Moore--Penrose solution if the edited covariance is
singular, remove an orthonormal basis for the full coefficient column space by QR,
and stop at zero feature--target cross-covariance. For $a=0$, the iteration count
is one and cumulative edit rank is $r$. For every $a\neq0$, the iteration count
is two and cumulative edit rank is $2r$, the ambient dimension. In both cases the
cumulative map is an orthogonal projector, so this edit rank also equals its
removed-subspace dimension. Yet the
sufficient linear dimension and minimum
independence-guarding rank are $r$ for every $a$.
\end{theorem}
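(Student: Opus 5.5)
I would compute each population quantity in block form, exactly as in the proof of Theorem~\ref{thm:ridge-count}, with scalars replaced by $r\times r$ blocks. Write $X_a=HA_a$ with $H=(S,N)$ and
$A_a=\left[\begin{smallmatrix}I_r&0\\aI_r&I_r\end{smallmatrix}\right]$. This gives
\[
\operatorname{Cov}(X_a)=\begin{bmatrix}I&aI\\aI&(1+a^2)I\end{bmatrix},
\qquad
\operatorname{Cov}(X_a,Y)=\begin{bmatrix}I\\0\end{bmatrix}.
\]
The first OLS fit is nonsingular, and its coefficient is $W_1=A_a^{-1}(I,0)^\top=\left[\begin{smallmatrix}I\\-aI\end{smallmatrix}\right]$, which has full column rank $r$. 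QR then removes the $r$-dimensional column space $\mathcal C_1=\operatorname{span}W_1$, and $P_1$ is the Euclidean projector onto $\mathcal C_1^\perp$.

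\textbf{Case $a=0$.} Here $\mathcal C_1$ consists of the first $r$ coordinates. Then $X_0P_1=(0,N)$, which is independent of $Y$, so the cross-covariance vanishes and the procedure stops. The count is one and the edit rank is $r$.

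\textbf{Case $a\neq0$.} The orthogonal complement is $\mathcal C_1^\perp=\operatorname{span}\left[\begin{smallmatrix}aI\\I\end{smallmatrix}\right]$. Let $R$ be its orthonormal basis, $R=\left[\begin{smallmatrix}aI\\I\end{smallmatrix}\right]/\sqrt{1+a^2}$. Then
\[
\operatorname{Cov}(X_aR,Y)=R^\top\begin{bmatrix}I\\0\end{bmatrix}=\frac{a}{\sqrt{1+a^2}}\,I\neq0,
\]
so the procedure does not stop. The edited covariance $P_1\Sigma P_1$ is singular, with range $\mathcal C_1^\perp$, so the Moore--Penrose solution is
\[
W_2=(P_1\Sigma P_1)^+P_1C=R\,(R^\top\Sigma R)^{-1}R^\top C .
\]
Since $R^\top\Sigma R$ is positive definite and $R^\top C$ is invertible, $W_2$ has column space exactly $\mathcal C_1^\perp$, of dimension $r$. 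QR of $W_2$ therefore removes that subspace. It is orthogonal to $\mathcal C_1$, so the cumulative basis spans $\mathbb R^{2r}$ and $P_2=0$. The cross-covariance is then zero and the procedure stops after two iterations with edit rank $2r$. In both cases the cumulative map is $I-U_kU_k^\top$ with orthonormal $U_k$, an orthogonal projector, so its edit rank equals the dimension of the removed subspace.

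\textbf{Invariant quantities.} For sufficient dimension, $Y=S=X_aA_a^{-1}(I,0)^\top$ is a linear function of an $r$-dimensional projection, which gives the upper bound $r$. For the lower bound, take any $B$ with fewer than $r$ columns. $Y$ is then not conditionally independent of $X$ given $XB$: $S$ given $X_aB$ remains a nondegenerate Gaussian with conditional covariance of rank at least $r-\operatorname{rank}B>0$, while $S$ is a deterministic function of $X$. For guarding rank, the block analogue $T=A_a^{-1}\operatorname{diag}(0,I)A_a$ of the known-minimum-guard proposition has rank $r$ and yields $(0,N)A_a$, which is independent of $Y$.

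The main obstacle is the matching lower bound on guarding rank. Suppose $T$ has $\operatorname{rank}(I-T)<r$. Then $\ker(I-T)$ has dimension greater than $r$ and is fixed by $T$. Its intersection with the $r$-dimensional subspace of linear functionals recovering $S$, namely $\operatorname{span}A_a^{-1}(I,0)^\top$ acting on columns, is nontrivial. So some nonzero combination $u^\top S=X_av$ satisfies $Tv=v$, hence $X_aTv=u^\top S$ is a nondegenerate function of $Y$. That contradicts independence. I would phrase this with $v$ ranging over a dimension-$r$ column space and a dimension count in $\mathbb R^{2r}$.
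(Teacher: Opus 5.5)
Your proposal is correct. For the iteration analysis and the sufficient-dimension bound it follows the paper's proof essentially step for step:
\begin{itemize}
\item the first block is $[I_r,-aI_r]^\top$, with complement spanned by $[aI_r,I_r]^\top$;
\item the retained cross-covariance has rank $r$;
\item the Moore--Penrose block has the form $R\cdot(\text{invertible})$, so it spans the whole complement;
\item sufficiency uses the Gaussian conditional-covariance rank bound.
\end{itemize}

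The one real difference is the lower bound on guarding rank. The paper goes through the cross-covariance. A guard forces $T^\top\operatorname{Cov}(X_a,Y)=0$, so $(I-T)^\top\operatorname{Cov}(X_a,Y)=\operatorname{Cov}(X_a,Y)$, and hence $\operatorname{rank}(I-T)\ge\operatorname{rank}\operatorname{Cov}(X_a,Y)=r$. You instead intersect the fixed space $\ker(I-T)$, of dimension greater than $r$, with the $r$-dimensional recovery space $\operatorname{col}(A_a^{-1}[I_r,0]^\top)$. This produces a vector $v$ with $X_aTv=u^\top S$.

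Both arguments are valid. Yours exhibits an exact surviving linear copy of part of $Y$ and does not need the step from independence to zero covariance. However, it depends on $Y$ being an exact linear function of $X_a$. The paper's inequality is distribution-free and ties guarding rank directly to cross-covariance rank. It therefore also covers settings such as a binary label, where no nonconstant linear functional of $X$ is a function of $Y$.

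There is one slip to correct. With the row convention $X_a=HA_a$, you have $\operatorname{Cov}(X_a)=A_a^\top A_a=\left[\begin{smallmatrix}(1+a^2)I&aI\\aI&I\end{smallmatrix}\right]$. The matrix you display is $A_aA_a^\top$ instead. Plugging it into $\Sigma^{-1}\operatorname{Cov}(X_a,Y)$ would give $[(1+a^2)I,-aI]^\top$, which contradicts your (correct) $W_1$. Nothing downstream breaks, because $W_1$ follows from $Y=X_aA_a^{-1}[I_r,0]^\top$ and the second step uses only $\Sigma\succ0$. You should also define $C=\operatorname{Cov}(X_a,Y)=[I_r,0]^\top$ before using it in $W_2$.
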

\begin{proof}
The coefficient block $B_\star=[I_r,-aI_r]^\top$ satisfies
$X_aB_\star=S=Y$, giving sufficient linear dimension at most $r$. For the lower
bound, let $Z=X_aD\in\mathbb R^m$ be any linear statistic sufficient for $Y$.
Because $Y$ is a deterministic function of $X_a$, conditional independence
$Y\perp X_a\mid Z$ requires $\operatorname{Cov}(S\mid Z)=0$. Joint Gaussianity
gives
\[
 \operatorname{Cov}(S\mid Z)
 =I_r-\operatorname{Cov}(S,Z)\operatorname{Var}(Z)^+
       \operatorname{Cov}(Z,S).
\]
The subtracted term has rank at most $m$; equaling $I_r$ therefore requires
$m\geq r$. Hence the sufficient linear dimension is exactly $r$.

The first coefficient block is
$B=[I_r,-aI_r]^\top$. For $a=0$, its removal leaves only $N$, which is independent
of $Y$. For $a\neq0$, the orthogonal complement of $\operatorname{col}(B)$ is
spanned by $C=[aI_r,I_r]^\top$. The retained coordinates are proportional to
$X_aC=aS+(1+a^2)N$, whose cross-covariance with $Y$ is $aI_r$ and therefore has
rank $r$. More explicitly, with
$P_C=C(C^\top C)^{-1}C^\top$, the edited covariance
$\widetilde\Sigma=P_C\operatorname{Cov}(X_a)P_C$ is positive definite on
$\operatorname{col}(C)$, and
$\operatorname{Cov}(X_aP_C,Y)=aC/(1+a^2)$ has rank $r$. The Moore--Penrose OLS
block $\widetilde\Sigma^+\operatorname{Cov}(X_aP_C,Y)$ therefore has rank $r$,
lies in $\operatorname{col}(C)$, and thus spans all of $\operatorname{col}(C)$.
The second QR removal exhausts the representation.

An edit that sets the $S$ component to zero gives a rank-$r$ independent guard.
Conversely, independence in this jointly Gaussian model requires zero
feature--target cross-covariance. If $T$ guards $Y$, then
$(I-T)^\top\operatorname{Cov}(X_a,Y)=\operatorname{Cov}(X_a,Y)$, so
$\operatorname{rank}(I-T)\geq\operatorname{rank}\operatorname{Cov}(X_a,Y)=r$.
Thus the minimum guarding rank is exactly $r$.
\end{proof}

For the circular-regression output width $r=2$, Theorem~\ref{thm:full-qr-count}
changes the reported QR direction count from two to four. It matches the
two-output MSE and full-QR removal rule, but not a particular angular data
distribution or finite held-out stopping threshold.

\subsection{Finite Circular-QR Bridge}

We therefore implement the motivating finite procedure on a controlled angular
target. Draw $\theta\sim\operatorname{Unif}[-\pi,\pi]$, set
$S=(\sin\theta,\cos\theta)$, draw $N\sim\mathcal N(0,.5I_2)$ to match the
per-coordinate variance of $S$, and form $X_a=(S+aN,N)$. The same latent draws,
labels, and probe initialization are reused across shears. Every $X_a$ has a
two-dimensional sufficient statistic and an explicit rank-two guard that removes
$S$ while retaining $N$.

Following the published specification \cite{joseph2026physics}, each fresh linear
probe predicts $(\sin\theta,\cos\theta)$ with MSE and Adam
\cite{kingma2015adam} for 100 epochs at
learning rate $10^{-3}$ and weight decay $10^{-4}$. An 80/20 split is fixed for the
entire sequence. We apply the literal recurrence
$X^{(k+1)}=X^{(k)}(I-Q_kQ_k^\top)$, where $Q_k$ is the full QR basis of the
two-column coefficient block. An update is accepted only when held-out
$R^2\geq.1$ and circular MAE $\leq80^\circ$; $K$ counts accepted updates before
the first failing probe. The source does not report minibatch size, so the primary
run declares 128 and repeats the decisive $n=4{,}000$, $a\in\{0,1\}$ comparison
at 64 and 256. We record both $\operatorname{rank}(I-T_k)$ and
$\operatorname{rank}(T_k)$ rather than equating $2K$ with independent removed
rank. We call the former cumulative edit rank because the literal $T_k$ need not
be idempotent. Coefficient QR uses absolute tolerance $10^{-8}$, and both reported
map ranks use absolute singular-value tolerance $10^{-6}$. Twenty seeds are used
per primary cell, with an eight-update audit cap.

\begin{table*}[t]
\caption{Theorems and experiments analyze related but distinct declared
procedures. No row silently inherits guarantees from another.}
\label{tab:procedure-map}
\centering
\scriptsize
\begin{tabular}{@{}p{0.18\textwidth}p{0.26\textwidth}p{0.22\textwidth}p{0.23\textwidth}@{}}
\toprule
Object & Update and probe & Stop & What it establishes \\
\midrule
Theorem~\ref{thm:ridge-count} & Cumulative Euclidean Gram--Schmidt; scalar population ridge & Zero population coefficient & Exact count 1 versus 2; excludes raw sequential projectors \\
Theorem~\ref{thm:full-qr-count} & Full coefficient-column QR; population OLS with declared Moore--Penrose selector & Zero feature--target cross-covariance & Cumulative edit rank $r$ versus $2r$; no equivariance claim \\
Finite circular bridge & Literal sequential QR; Adam/MSE, two outputs, fixed held-out split & Published $R^2$/circular-MAE thresholds & Practical stopping sensitivity and count/edit-rank separation \\
Visual case study & Cumulative scalar Gram--Schmidt; LBFGS logistic eraser and fresh attacker & Fixed audited ranks; descriptive validation stopping secondary & Procedure dependence on unchanged frozen visual features \\
\bottomrule
\end{tabular}
\end{table*}

We verify Theorem~\ref{thm:ridge-count} for
$\lambda\in\{0,.1,1,10\}$. A complementary population control
uses $H\sim\mathcal N(0,I_8)$, $X=HA$, and continuous target
$Y=(H_1,.7H_2)$. Its sufficient dimension and minimum guarding rank are two. At
each step, multivariate least squares is refit and the leading left singular vector
of its coefficient matrix is removed, with unequal target scales fixing tie-breaking.
Stopping requires zero feature--target cross-covariance. Across 50 predeclared
Gaussian-QR orientations per condition, singular values of $A$ are geometrically
spaced between $\kappa^{-1/2}$ and $\kappa^{1/2}$. The scalar theorem verification
uses relative cross-covariance, coefficient, and numerical-rank tolerance
$10^{-10}$. The ambient-eight control uses relative Frobenius cross-covariance
and direction-collapse tolerance $10^{-9}$ and Moore--Penrose cutoff $10^{-9}$.

\section{Exploratory Visual Case-Study Protocol}

The central non-identification claim is established by the population and
controlled finite constructions above. The visual analyses ask only whether the
same procedural sensitivity is visible in frozen features; they do not estimate a
contact dimension or provide an untouched confirmatory benchmark.

\subsection{Tasks and Representations}

100DOH labels an image positive when any annotated hand has contact state 3/4 and
negative only when all hands have state 0 \cite{shan2020understanding}. Ambiguous
states 1/2 are excluded. The balanced audit subset has 1,600/200/200
train/validation/test images with released split boundaries and supports matched
V-JEPA2/DINOv2 audits. A separate maximal balanced V-JEPA2 audit uses every 10,645
eligible negative and an equal number of positives: 17,206/2,126/1,958 examples
and 13,081 source videos, with no video crossing splits. TouchMoment is constructed
from HOI4D and the egocentric subset of TACO
\cite{liu2022hoi4d,liu2024taco,nguyen2026detecting}. Positives end at a touch
onset. Negatives end 0.5 seconds earlier in the same video, with hand visibility
required and other touches excluded from each causal window. It contains
11,564/618/2,588 examples and 1,294 test
pairs from 647 videos. Videos do not cross splits.

All 100DOH splitting and resampling groups by source video. TouchMoment groups by
source video, preserving each onset/pre-onset pair. Unless an analysis reports
split-seed quantiles, uncertainty uses 1,000 group-bootstrap replicates. No row-
or frame-level bootstrap is used.

We freeze \path{facebook/vjepa2-vitl-fpc64-256}, mean-pool hidden tokens, and use
validation-selected layers 21 (100DOH) and 18 (TouchMoment). DINOv2-Base uses
candidate images and layers 12 and 11. Feature dimensions are 1,024 and 768.
AdamW probes with seeds 17/23/42 are used only for the initial layer sweep. All
rank audits use deterministic LBFGS logistic probes or LinearSVC.

The primary label is contact. Predicate, nuisance, temporal, random-subspace, and
stability audits are retained in the artifact but are secondary to identification.

\subsection{Metric, Estimator, and Stopping Controls}

Features are standardized using official-training statistics. We compare
Euclidean intervention geometry with empirical, Oracle Approximating Shrinkage
(OAS), and Ledoit--Wolf covariance maps
\cite{chen2010shrinkage,ledoit2004well}. Eigenvalues are floored at
$10^{-4}\lambda_{\max}$ where needed. Unshrunk empirical covariance, when full rank,
uses neither shrinkage nor a floor. Unlike shrinkage toward identity or flooring,
it is affine-equivariant. We report the full eigenspectrum, effective and stable
rank, pre-floor positive-spectrum condition number, post-floor condition number,
floored count, and direction alignment. Five independent stratified covariance
subsamples are drawn at every non-full sample size.

The metric-aware iteration is fixed explicitly. Let $W=I$ for Euclidean geometry
or $W=\widetilde\Sigma^{-1/2}$ for a covariance metric, where
$\widetilde\Sigma$ and its eigenvalue floor are estimated once from intact
training features. Set $Z_0=XW$ and $U_0$ empty. At iteration $k$, fit coefficient
$w_k$ on $X_k=Z_kW^{-1}$ with one intact validation-selected $C$, map it to metric
coordinates as $a_k=W^{-1}w_k$, and compute
\begin{align}
 q_k &=(I-U_{k-1}U_{k-1}^{\top})a_k, &
 u_k &=q_k/\lVert q_k\rVert_2, \\
 U_k &=[U_{k-1},u_k], &
 Z_k &=Z_0(I-U_kU_k^{\top}).
 \label{eq:iteration}
\end{align}
We stop if $q_k$ numerically collapses. Covariance is never recomputed from edited
features, so singular edited covariances do not enter later steps. The cumulative
original-coordinate map is $T_k=W(I-U_kU_k^\top)W^{-1}$. Explicit
orthogonalization gives $\operatorname{rank}(I-T_k)=k$ in every retained run.
Across 526 archived ranks, the maximum Gram-matrix error is
$2.4\times10^{-15}$. Thus ``rank $k$'' here means both $k$ iterations and
cumulative edit rank $k$ for these scalar runs. In the multivariate full-QR
construction, one iteration may add several independent directions, so we report
iteration count and cumulative edit rank separately.

For descriptive stopping only, a rank is selected if its validation group-bootstrap
AUROC interval lies wholly in $[.45,.55]$ for three consecutive ranks. The same
validation split also selected layer and intact $C$, and the $n=200$ audit has
essentially no power under this equivalence rule. We therefore do not use selected
counts as numerical estimates in the main argument. To test post-edit accessibility
more strongly, a separate analysis divides validation groups into selection and
audit halves, selects $C\in\{10^{-4},10^{-3},10^{-2},10^{-1},1\}$ independently
at every rank, swaps the halves, and repeats over five split seeds.

The real-feature maps are fixed before evaluation. The primary dense family is
$A_\kappa=Q D_\kappa Q^\top$, where $Q$ is a randomized signed-Hadamard basis and
the 1,024 diagonal entries of $D_\kappa$ are log-spaced from
$\kappa^{-1/2}$ to $\kappa^{1/2}$. Five fixed maps are used per condition. Every
post-edit feature is inverse-mapped before attack, and all probes are fit in the
original training-standardized coordinates with the intact validation-selected
$C=.01$, which is neither reselected nor transported across maps.

A preprocessing ablation recomputes coordinatewise training mean and standard
deviation after each dense map at
$\kappa\in\{1,2,3,5,10,100,1000\}$. Erasure is performed in those restandardized
mapped coordinates; edited features are then unstandardized and inverse-mapped
before the common attacker. We retain the pre- and post-standardization condition
numbers and all validation predictions. At rank ten, 2,000 paired group-bootstrap
replicates resample the 193 validation source videos and compare the mean of the
five fixed mapped trajectories with identity. A secondary family uses 20 positive
diagonal maps with randomly permuted log-spaced scales; coordinatewise
restandardization cancels that family algebraically. Twenty Haar orthogonal maps
provide a solver sanity check. Unshrunk empirical covariance is evaluated on five diagonal maps
per condition with no floor or shrinkage. These declared map families are
existential stress tests, not a distribution over all invertible maps.

A stronger dense-map analysis removes the fixed-attacker concern without using
official validation or test. For each of five source-video group partitions of the
1,600 official-training examples, role A (793--801 examples) learns the erasure
trajectory and both original and mapped coordinate standardization; role B
(484--490) trains a fresh post-edit attacker; and role C (310--318) is evaluated
once. B is internally group-split into fit and tuning roles, selects
$C\in\{10^{-3},10^{-2},10^{-1},1\}$ independently for every map and audited rank,
then refits on all B. We compare identity with the same five $\kappa=10$ dense maps
at ranks 0/1/2/5/10. Every A/B/C source-group overlap is zero. The 25 map--split
differences describe sensitivity over the declared five maps and five partitions;
their empirical range is not labeled a population confidence interval.

The disclosed research log contains 1,865 historical official-test configurations,
and no untouched local holdout remains. Those results are descriptive and absent
from the primary tables. The visual identification analyses are exploratory:
they use official validation or three-way roles drawn from official training and
record zero official-test queries, but validation reuse precludes confirmatory
language. A protocol timeline documents every split role, and a prospective
replication protocol is frozen for a future dataset.

\subsection{Closed-Form and Cross-Fitted Baselines}

We directly implement Euclidean Mean Projection (equivalent here to full-rank SAL)
and the exact affine LEACE map under empirical and OAS covariance. Estimating the
class-mean difference and training a logistic attacker on the same sample forces
zero training gradient at zero coefficient after mean equalization. Its exact
.500 test AUROC is therefore only an algebraic sanity check, not evidence of a
population one-direction route.

Our cross-fit calibration separates data roles within official training. For each
of five source-video-disjoint partitions, A estimates an intervention from a
balanced subsample, B trains a fresh attacker, and C evaluates it. We vary the A
sample size, report split dispersion, and query neither official validation nor
test. The synthetic version draws 14,000 samples independently from
$X\sim\mathcal N(0,I_{1024})$ with $Y=\mathbb 1[X_1\geq0]$: 10,000 form the
eraser pool, 2,000 train a fresh logistic attacker with $C=.01$, and 2,000 are
evaluated once. Coordinates are standardized using the eraser and attacker pools.
For each balanced A-sample, sample MP/SAL projects out its normalized empirical
class-mean difference; the population oracle instead zeros $X_1$. Ten independent
seeds are run. This distinguishes finite-sample residual access from a claim about
population guarding rank.

Controls remove 20 Gaussian and 20 variance-matched random subspaces, top-PCA
subspaces, and five sequential shuffled-label subspaces in the identical Euclidean
or OAS metric space. We retain AUPRC, representation displacement, and raw
predictions. Stability comparisons use one common standardization and whitening
map fit on the official training split. Split-specific whitening is never compared
with Euclidean principal angles.

\section{Results}

\paragraph{Evidence hierarchy.}
Theorems~\ref{thm:ridge-count} and~\ref{thm:full-qr-count} establish exact
population non-identification. The finite circular experiment asks whether the
separation survives a practical optimizer and held-out stopping rule. The visual
study asks only whether coordinate sensitivity is observable in one frozen-feature
case study; it is neither needed for the theorems nor a direct replication of the
motivating video study.

\subsection{Conventional Probes Localize Access}

Before any erasure, validation-only layer sweeps establish where linear contact
access is strongest under the original probe protocol. Table~\ref{tab:layers}
reports the layers fixed for the later audit. V-JEPA2 selects layers 21 and 18;
DINOv2 selects layers 12 and 11. This localization remains useful: it identifies
which representations are audited. It does not make the selected depth a unique
contact layer or a model-independent semantic coordinate.

\refstepcounter{table}
\label{tab:layers}
\begin{center}
\small
\begin{tabular}{@{}llrr@{}}
\toprule
Dataset & Encoder & Layer & Val. AUROC \\
\midrule
100DOH & V-JEPA2 & 21 & .885 \\
TouchMoment & V-JEPA2 & 18 & .944 \\
100DOH & DINOv2 & 12 & .858 \\
TouchMoment & DINOv2 & 11 & .807 \\
\bottomrule
\end{tabular}
\par\smallskip
\parbox{0.97\columnwidth}{\small\textbf{Table~\thetable:}
Validation-selected layers from the initial three-seed sweeps. These values
select audit points; they are not estimates of semantic depth.}
\end{center}

\subsection{The Integer Count Itself Is Not Identified}

Theorem~\ref{thm:ridge-count} supplies the integer statement: identity mixing stops after one
population edit, while every nonzero shear stops after two for ridge
$\lambda\in\{0,.1,1,10\}$. Labels, sufficient dimension, and minimum guarding
rank are unchanged. Theorem~\ref{thm:full-qr-count} applies the motivating full
coefficient-subspace QR rule: for two outputs, cumulative edit rank is two at
$a=0$ and four after any nonzero shear, despite a common sufficient dimension and
minimum guarding rank of two. Figure \ref{fig:identification} distinguishes these
estimands and gives a complementary one-direction-at-a-time result. With sufficient dimension and
minimum guarding rank two, Euclidean count is two under orthogonal coordinates
and eight for all 50 maps at each $\kappa\in\{3,10,100,1000\}$. The covariance metric
returns two in every trial.

The finite circular bridge reaches the same conclusion under the published
held-out stopping rule. At $n=4{,}000$, identity mixing gives $K=1$ in all 20 runs,
as does the weak shear $a=.25$. For each tested shear
$a\in\{.5,.75,1,1.25,2\}$, all 20 runs accept at least two updates. At $a=1$,
post-first-edit held-out $R^2$ is .195 and circular MAE is
$59.5^\circ$ on average; capped mean $K$ is 5.75 (range 3--8), with two of 20
runs right-censored at eight. Under the literal sequential recurrence,
$\operatorname{rank}(I-T_K)$ has reached four while
$\operatorname{rank}(T_K)$ remains about two, so $2K$ is not an independent
removed-subspace dimension. The decisive identity/$a=1$ separation also holds in all ten
runs at minibatch sizes 64 and 256, with mean shear counts 5.4 and 4.9. At
$n=1{,}000$, only 15/20 $a=1$ runs accept a second update, consistent with the
continuous residual magnitude in Equation~\ref{eq:shear-residual}; finite stopping
is sample- and optimizer-relative rather than a noiseless copy of the exact
theorem.

The real-feature test uses V-JEPA2 layer 21 on the 100DOH official validation
split and isolates the same dependence without changing images, labels, or the
class of intact linear scores. After mapped-coordinate restandardization, mean
rank-ten AUROC is .599 at $\kappa=1$ and .666/.673/.745/.791 at
$\kappa=2/3/5/10$, then .853/.840 at $100/1{,}000$. The corresponding mean
post-standardization condition numbers are 2.09/3.19/5.41/10.89 and
110.64/1,070.68. Source-video-paired 95\% bootstrap intervals for the mean
mapped-minus-identity rank-ten difference are [.008,.134] already at $\kappa=2$
and [.110,.275] at $\kappa=10$ (Figure~\ref{fig:identification}, bottom left).
Without restandardization, the previously declared dense maps give
.802/.841/.853 at $\kappa=10/100/1{,}000$; the practical effect is therefore not
explained by coordinate scaling alone. The secondary diagonal family gives
.599/.809/.848/.846 at $\kappa=1/10/100/1{,}000$ and is algebraically canceled by
coordinatewise restandardization.
Unshrunk empirical-covariance intervention, with no shrinkage or floor, gives the
identical complete trajectory for all maps: rank-one .587 and rank-ten .518, with
maximum prediction discrepancy $2.11\times10^{-6}$. Thus unshrunk empirical covariance matches
the corollary of Proposition~\ref{prop:metric-equivariance}. Twenty Haar
orthogonal maps also reproduce the complete
Euclidean AUROC trajectory, with maximum prediction discrepancy
$4.27\times10^{-6}$. Anisotropy, not arbitrary coordinate renaming or solver noise,
drives the stress-test difference.

\begin{figure*}[t]
\centering
\includegraphics[width=0.96\textwidth]{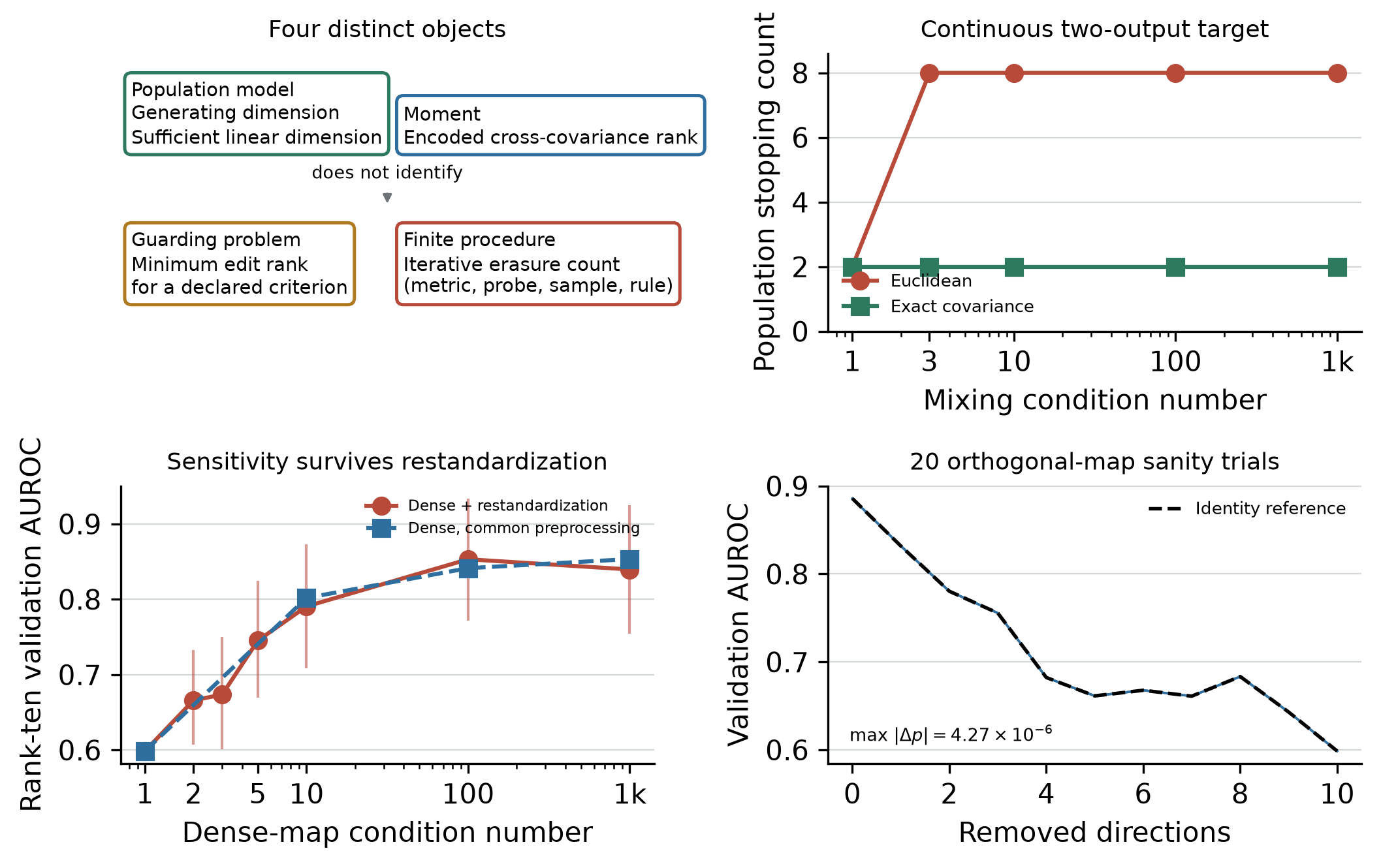}
\caption{\textbf{Identification and controls.} \emph{Conceptual, top left:} four
quantities that need not agree. \emph{Population, top right:} a continuous
two-output target has Euclidean count two or eight, while the covariance-metric
count stays at two. \emph{Real features, bottom:} dense-map sensitivity
survives mapped-coordinate restandardization (left); vertical intervals are paired
source-video bootstrap intervals for mean mapped-minus-identity AUROC. Bottom
right: 20 orthogonal maps reproduce the identity trajectory, and individual
trials overlap.}
\label{fig:identification}
\end{figure*}

\subsection{Stronger Attackers Preserve Procedure Dependence}

The official-training-only dense-map audit directly retunes a fresh attacker for
each map and rank. Rank-zero AUROC is identical at .822. At rank ten, identity
averages .665 across five source-disjoint partitions, whereas the five
$\kappa=10$ maps average .776 across 25 map--partition runs. The paired difference
has mean +.111 and empirical range [.075,.155], and is positive for all 25 pairs.
The corresponding differences are already positive for all pairs at rank one
(mean +.033) and rank two (+.060). Thus the transformed-map effect is not an
artifact of reusing one post-edit $C$ or one attacker fit. These are conditional
within-training case-study results, not population intervals.

Figure \ref{fig:attackers} retunes regularization at every rank on one
group-disjoint validation half and evaluates on the other, with halves swapped
over five seeds. On 100DOH, Euclidean/OAS AUROC is .827/.586 after one edit and
.598/.494 after ten. On TouchMoment it is .924/.811 after one and .778/.447 after
ten. The curves remain strongly metric-dependent and nonmonotone even when each
rank receives a fresh attacker search. They should not be compressed into a
precise semantic count. We call the reported quantity orientation-fixed
concordance: it is ordinary AUROC with score orientation fixed on the selection
half and never flipped on the audit half. A value below .5 therefore records an
orientation reversal on the audit half, not equivalence to chance. Flipping it
after inspection would turn the same instability into an apparently above-chance
score.

\begin{table*}[t]
\caption{Controlled identification results. Count and cumulative edit rank are
distinguished in the theorem statements and finite recurrence.}
\label{tab:population-results}
\centering
\scriptsize
\begin{tabular}{@{}p{0.25\textwidth}p{0.12\textwidth}p{0.15\textwidth}p{0.10\textwidth}p{0.25\textwidth}@{}}
\toprule
Construction & Setting A & Setting B & Difference & Identified conclusion \\
\midrule
Population ridge shear count & $a=0$: 1 & $a\ne0$: 2 & +1 & Same rank-one concept under isotropic or transported ridge \\
Continuous target, Euclidean count & $\kappa=1$: 2 & $\kappa\ge3$: 8 & +6 & Same sufficient dimension and guard rank 2 \\
Continuous target, exact-cov. count & $\kappa=1$: 2 & $\kappa=1000$: 2 & 0 & Affine-equivariant population control \\
Finite circular QR, $n=4000$ & $a=0$: $P(K\ge2)=0$ & $a=1$: $P(K\ge2)=1$ & 0/20 vs. 20/20 & Published Adam/QR threshold remains coordinate-sensitive \\
\bottomrule
\end{tabular}
\end{table*}

\begin{table*}[t]
\caption{Empirical case-study results; every row makes zero official-test queries.
For dense restandardization, brackets give a source-video-paired 95\% bootstrap
interval conditional on fixed maps and selections. For fresh attackers they give
the empirical range over 25 declared map--partition pairs, not a confidence
interval. Cross-fit $n_A$ denotes eraser-estimation sample size within official
training.}
\label{tab:empirical-results}
\centering
\scriptsize
\begin{tabular}{@{}p{0.25\textwidth}p{0.12\textwidth}p{0.15\textwidth}p{0.10\textwidth}p{0.25\textwidth}@{}}
\toprule
Audit & Setting A & Setting B & Difference & Identified conclusion \\
\midrule
Dense + restandardization, rank 10 & $\kappa=1$: .599 & $\kappa=10$: .791 & +.192 [.110,.275] & Coordinate sensitivity survives ordinary preprocessing \\
Dense fresh attacker, rank 10 & $\kappa=1$: .665 & $\kappa=10$: .776 & +.111 [.075,.155] & Map-specific per-rank tuning; 25/25 paired differences positive \\
Unshrunk empirical-covariance rank 10 & $\kappa=1$: .518 & $\kappa=1000$: .518 & .000 & Transported-metric corollary \\
Orthogonal maps, all ranks & Identity & 20 maps & $|\Delta p|\le4.27\times10^{-6}$ & Euclidean-invariant sanity check \\
Synthetic cross-fit MP/SAL & $n_A=100$: .881 & $n_A=8000$: .572 & -.309 & Sample error despite rank-one population guard \\
100DOH-max cross-fit LEACE-OAS & $n_A=100$: .876 & $n_A=6000$: .647 & -.229 & Cross-fit within official training \\
TouchMoment cross-fit LEACE-OAS & $n_A=100$: .927 & $n_A=4000$: .626 & -.301 & Cross-fit within official training \\
\bottomrule
\end{tabular}
\end{table*}

\begin{figure*}[t]
\centering
\includegraphics[width=0.91\textwidth]{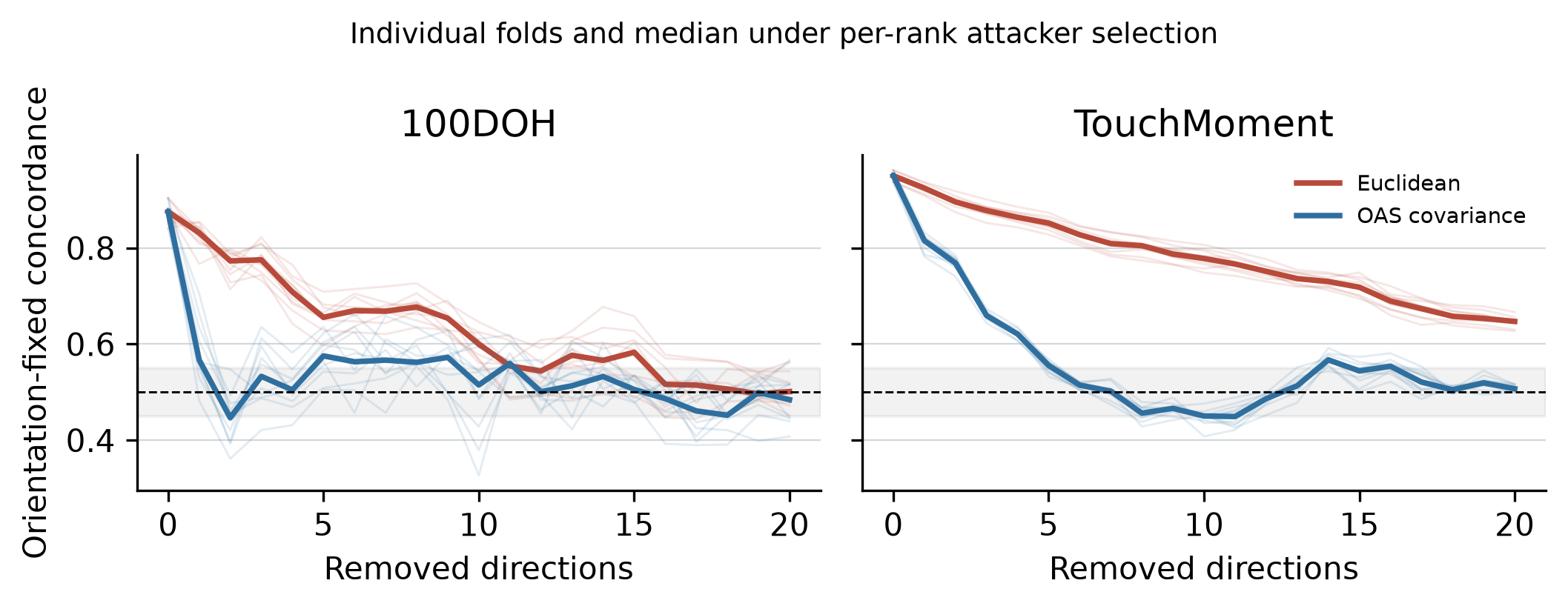}
\caption{\textbf{Per-rank attacker audit.} Each rank selects logistic
regularization on one validation-group half and evaluates on the other, with
halves swapped over five seeds. Thin lines are all ten folds, while thick lines are
medians. These are not confidence bands.}
\label{fig:attackers}
\end{figure*}

\subsection{Secondary Estimation Calibrations}

Covariance-subsampling and cross-fit learning curves support interpretation rather
than the central theorem. They show that covariance geometry remains
sample-dependent and that a sample-estimated rank-one edit can leave substantial
fresh-attacker access even for a known population rank-one guard. Appendix
\ref{app:estimation-calibration} reports the full curves and fold dispersion; the
results are not evidence for multiple population contact routes.

\section{Limitations}

Our affine theorem and latent construction concern linear representations and
linear interventions. They do not identify nonlinear, tokenwise, spatial,
temporal, or causal concept organization. The visual study centers on contact and
one primary V-JEPA2 checkpoint. Secondary predicates share the same 100DOH images.
Their results, nuisance controls, random baselines, and temporal audits remain in
the artifact. DINOv2 is an image encoder, not a temporally matched video model.
Theorem~\ref{thm:full-qr-count} matches the motivating output width, squared loss,
and full-QR removal algebra at population level. The finite circular bridge also
matches the published angular output, Adam settings, epochs, literal QR recurrence,
and held-out thresholds, but it uses a controlled distribution rather than the
motivating video activations and does not reproduce steering. Because the source
does not report minibatch size, we declare 128 and audit 64/256 rather than claim an
exact hidden implementation match. Its fixed held-out split is repeatedly queried
across ranks, as in the published specification, so it is a procedure stress test
rather than a new generalization estimate. The ambient-eight leading-vector
experiment remains a complementary sequential variant.

The contact labels are imperfect semantic targets. In 100DOH, released object-box
presence is nearly label-deterministic, and the nuisance-matched hard subset has
only 58 pairs. TouchMoment contrasts onset with a nearby pre-onset frame, so hand
closure, approach, endpoint appearance, and action phase remain informative.
Temporal and nuisance controls delimit these effects but cannot make either task a
pure test of physical touch.

Cross-fitting diagnoses sample-estimated generalization, not universal linear
guardedness. Its known-rank-one calibration shows that residual access can be
large without multiple population routes. OAS and Ledoit--Wolf reduce covariance
estimation problems but do not create a canonical semantic metric. Descriptive
stopping reuses validation for layer, regularization, and rank and lacks formal
sequential error control. Historical official-test exposure was extensive, no new
untouched holdout is available, and none of those results is confirmatory. A
prospective protocol is frozen for a future dataset. The anisotropic visual maps
comprise five fixed dense signed-Hadamard SPD maps and a secondary diagonal family.
They establish dependence for those declared families, not for a distribution over
all invertible maps. Dense-map sensitivity survives mapped-coordinate
restandardization and paired source-video resampling, but it remains an exploratory
official-validation result. The fresh-attacker map audit remains conditional on five
declared training partitions and five maps; its 25 paired values are not independent
population draws. Transported-metric equivariance requires an equivariant probe,
transported positive-definite regularization after rank loss, common support, and
equivariant tie-breaking. Rank-deficient Moore--Penrose fitting is explicitly not
covered. Exact covariance is only one corollary and the unshrunk empirical-
covariance trajectory is an
algebraic consistency control, not evidence that covariance is the semantically
correct metric. Finally,
an invertible map preserves information and
linear prediction classes, not Euclidean geometry. Readers may legitimately study
that geometry after declaring it as part of the estimand.

\paragraph{Reproducibility.}
The supplementary artifact contains annotation-safe manifests, extraction
configurations, every analysis script, spectra, fold-level and aggregate results,
and a one-command cached-analysis entry point. Licensed media, feature caches, and
large prediction arrays are not redistributed.

\section{Conclusion}

Many erased directions need not mean many semantic dimensions. Population ridge
count changes from one to two for the same rank-one concept. Under full two-output
QR removal, cumulative edit rank changes from two to the ambient dimension four, while
a complementary sequential construction reaches ambient dimension eight. Under
the published finite circular Adam/QR protocol, identity stops after one update in
all large-sample runs while shears pass multiple held-out updates and can continue
after edit rank saturates. We prove the conditions under which any covariantly
transported positive-definite metric makes the complete trajectory
affine-equivariant, with exact covariance as a corollary. The same frozen visual
features show a large dense-map change after coordinatewise restandardization and
map-specific fresh-attacker tuning, while orthogonal maps reproduce the identity
trajectory. Finite-sample calibration further prevents residual attacker access
from being read as multiple population routes. None of these results estimates a
contact dimension.

The defensible object is an explicitly named procedure-relative estimand. Reports
of iterative erasure count should specify the metric, probe loss, regularizer, covariance
estimator, sample, orientation rule, stopping rule, and test-use policy, and should
separate generating dimension, sufficient dimension, encoded cross-covariance
rank, and guarding rank. Without those commitments, iterative erasure count is
not an affine-invariant concept dimension.

\bibliographystyle{plain}
\begingroup
\small
\bibliography{refs}
\endgroup

\clearpage
\appendix
\setcounter{table}{0}
\renewcommand{\thetable}{A\arabic{table}}
\renewcommand{\theHtable}{A\arabic{table}}
\section{Supplementary Finite Circular-QR Results}

Table~\ref{tab:finite-qr-supp} gives representative sample-size cells from the
primary minibatch-128 campaign. $K$ counts accepted QR updates before the first
held-out failure. Runs still passing at the eight-update cap are right-censored;
$K=8$ can also be observed exactly. The $n=500$ identity row shows why finite error must be
calibrated: even the untransformed rank-two construction sometimes passes a second
probe. As sample size grows, identity stabilizes at one while moderate shears pass
multiple updates. This is evidence about the declared procedure, not a population
dimension estimate.

\refstepcounter{table}
\label{tab:finite-qr-supp}
\begin{center}
\centering
\scriptsize
\resizebox{\columnwidth}{!}{%
\begin{tabular}{@{}rrrrrr@{}}
\toprule
$n$ & Shear $a$ & Capped mean $K$ & $P(K\geq2)$ & Censored/20 & Mean post-first $R^2$ \\
\midrule
500 & 0 & 1.80 & .45 & 0 & .089 \\
500 & .5 & 1.95 & .40 & 0 & .114 \\
500 & 1 & 1.85 & .40 & 0 & .116 \\
500 & 2 & 1.10 & .25 & 0 & .065 \\
1000 & 0 & 1.00 & .00 & 0 & .001 \\
1000 & .5 & 1.75 & .35 & 0 & .079 \\
1000 & 1 & 2.95 & .75 & 0 & .122 \\
1000 & 2 & 2.35 & .45 & 1 & .109 \\
4000 & 0 & 1.00 & .00 & 0 & -.002 \\
4000 & .5 & 4.35 & 1.00 & 0 & .134 \\
4000 & 1 & 5.75 & 1.00 & 2 & .195 \\
4000 & 2 & 5.90 & 1.00 & 1 & .132 \\
\bottomrule
\end{tabular}}
\par\smallskip
\parbox{0.97\columnwidth}{\scriptsize\textbf{Table~\thetable:} Finite
circular-target stress test, 20 seeds per cell. Some small-$n$
shears fail even initially, so post-first means use runs that accepted the intact
probe. Full ranges, censoring rates, and all seven shears are retained in the
artifact.}
\end{center}

For the seven $n=4{,}000$ conditions
$a\in\{0,.25,.5,.75,1,1.25,2\}$, the numbers of right-censored runs are
$0,0,0,1,2,3,1$ out of 20. Table~\ref{tab:finite-survival} therefore reports the
empirical survival probabilities through the declared cap rather than relying on
capped means alone.

\refstepcounter{table}
\label{tab:finite-survival}
\begin{center}
\small
\begin{tabular}{@{}rrr@{}}
\toprule
$k$ & $P(K\geq k)$, $a=0$ & $P(K\geq k)$, $a=1$ \\
\midrule
1 & 1.00 & 1.00 \\
2 & .00 & 1.00 \\
3 & .00 & 1.00 \\
4 & .00 & .90 \\
5 & .00 & .85 \\
6 & .00 & .50 \\
7 & .00 & .30 \\
8 & .00 & .20 \\
\bottomrule
\end{tabular}
\par\smallskip
\parbox{0.97\columnwidth}{\small\textbf{Table~\thetable:}
Censoring-aware finite circular-QR summary for the decisive $n=4{,}000$
conditions. Probabilities through $k=8$ are identified despite right-censoring at
the cap; behavior beyond eight is not.}
\end{center}

At $n=4{,}000$ and $a=1$, minibatch-size sensitivity preserves the separation.
For sizes 64/128/256, identity has mean $K=1.0$ and $P(K\geq2)=0$; the shear has
mean $K=5.4/5.75/4.9$ and $P(K\geq2)=1$. Sizes 64 and 256 use ten seeds each; the
primary size 128 uses twenty.

\section{Supplementary Estimation Calibrations}
\label{app:estimation-calibration}

The raw positive-spectrum condition numbers are $9.1\times10^4$ for full
100DOH-max and $3.1\times10^5$ for TouchMoment. Flooring caps the post-floor value
at exactly $10^4$. Earlier larger diagnostics were computed before flooring.
Repeated subsamples show that covariance sample composition does not explain away
the trend (Figure \ref{fig:estimation}, left). OAS rank-one validation AUROC on
100DOH-max has median .838 and range [.830,.845] at $n=200$, versus .639 at all
17,206 samples. TouchMoment has median .844 and range [.828,.861] at $n=500$,
versus .722 at all 11,564. Individual subsamples are shown rather than treating
five repetitions as a confidence interval. More samples improve an estimator, but
they do not create a canonical semantic metric or rank.

The synthetic cross-fit calibration changes the interpretation of fresh-attacker
access. In 1,024 dimensions, sample-estimated MP/SAL leaves independent AUROC
.881, .754, .683, .622, and .572 at eraser sample sizes 100, 1,000, 2,000, 4,000,
and 8,000, despite the known population rank-one guard. The population oracle
gives .499 at every size (Figure \ref{fig:estimation}, center). Large residual
access is therefore compatible with high-dimensional estimation error alone.

Cross-fitting within official training shows the same sample-size dependence
(right). LEACE-OAS falls from .876 to .647 over $n_A=100$--6,000 on 100DOH-max
and from .927 to .626 over $n_A=100$--4,000 on TouchMoment. Five
source-video-disjoint partitions give standard deviations below .014 at the
largest sizes. These results do not identify population guarding rank or multiple
contact routes. They show how much independently trained access survives a
sample-estimated edit under a declared A/B/C protocol.

\begin{figure*}[t]
\centering
\includegraphics[width=0.98\textwidth]{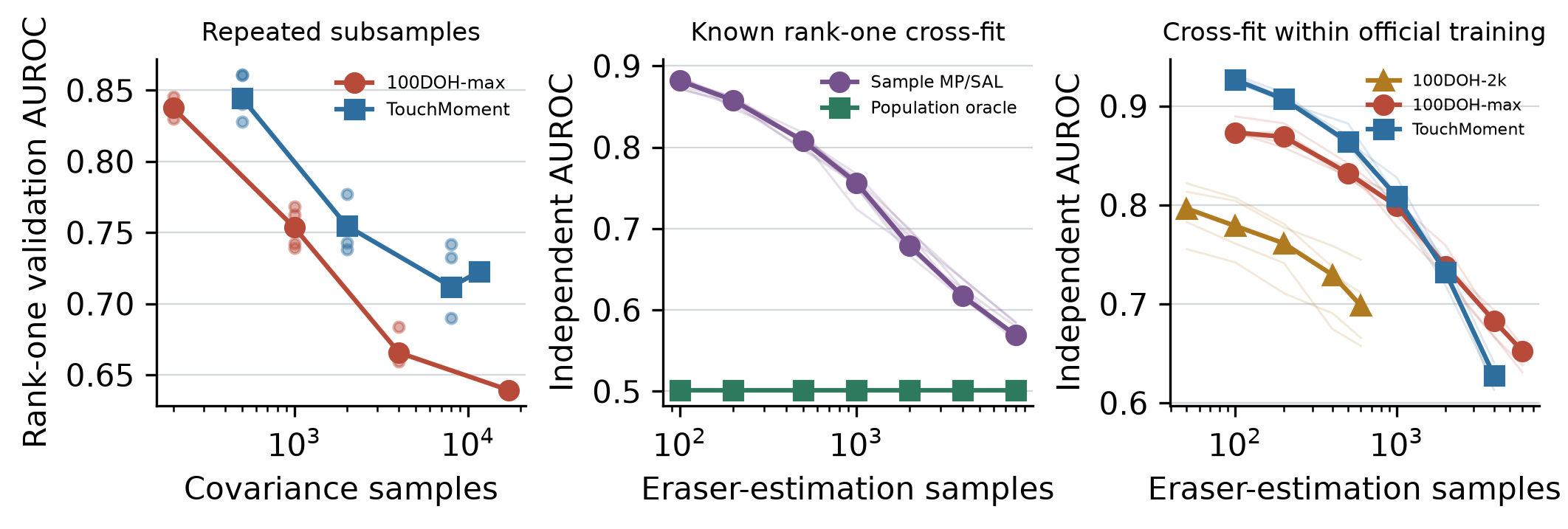}
\caption{\textbf{Estimation calibration.} Left: repeated OAS covariance
subsamples. Center: sample-estimated rank-one edits leave access for a known
population rank-one concept, unlike its oracle. Right: cross-fitting within
official training. Thin lines and points show every repetition, while thick lines
are medians.}
\label{fig:estimation}
\end{figure*}
\FloatBarrier

\section{Optional Supplementary Native-Basis Channel Case Study}
\label{app:probing}

This optional study is not part of the paper's affine non-identification evidence.
We revisit the channel result with AUROC/AUPRC rather than thresholded F1. Within
official training only, A ranks channels, B tunes and fits an attacker, and C
evaluates; all roles are source-video disjoint. We rank by orientation-free
univariate AUROC, replace selected standardized coordinates by the A-sample mean,
compare uniformly random sets, and score both the intact B attacker and a fresh
post-edit B attacker. Primary selected-layer audits use 20 outer splits. Focused
depth and full-100DOH calibrations use 10 and five splits, respectively. Layers
19--20 use a source-stratified 2,000-train/618-validation subset with no test rows;
their A/B/C channel analysis uses training only.

\refstepcounter{table}
\begin{center}
\scriptsize
\resizebox{\columnwidth}{!}{%
\begin{tabular}{@{}llrrrrrr@{}}
\toprule
Data & Encoder & Layer & Splits & Fixed $\Delta$ & Fresh $\Delta$ & Jacc. & 407 top-50 \\
\midrule
100DOH-2k & V-JEPA2 & 21 & 20 & .017 & .003 & .478 & 0/20 \\
100DOH-full & V-JEPA2 & 21 & 5 & .032 & .000 & .713 & 0/5 \\
TouchMoment & V-JEPA2 & 18 & 20 & .052 & .001 & .841 & 0/20 \\
Touch-subset & V-JEPA2 & 19 & 10 & .022 & .002 & .605 & 0/10 \\
Touch-subset & V-JEPA2 & 20 & 10 & .038 & .005 & .639 & 0/10 \\
TouchMoment & V-JEPA2 & 21 & 20 & .068 & .000 & .732 & 0/20 \\
TouchMoment & V-JEPA2 & 22 & 20 & .062 & .001 & .800 & 20/20 \\
TouchMoment & V-JEPA2 & 23 & 10 & .044 & .001 & .735 & 10/10 \\
TouchMoment & V-JEPA2 & 24 & 10 & .065 & .001 & .793 & 0/10 \\
100DOH & DINOv2 & 12 & 20 & .008 & .001 & .481 & -- \\
TouchMoment & DINOv2 & 11 & 20 & .074 & .007 & .800 & -- \\
\bottomrule
\end{tabular}}
\par\smallskip
\parbox{0.97\columnwidth}{\scriptsize\textbf{Table~\thetable:} Excess AUROC drop of
univariate-AUROC-ranked top-50 sets relative to matched random sets. ``Fixed''
scores the intact attacker; ``Fresh'' retrains after editing. Jaccard is mean
pairwise top-50 overlap across outer splits.}
\end{center}

The table separates stable routes from necessary information. Full TouchMoment
sets are highly repeatable and perturb fixed readers by .044--.068 AUROC beyond
random, while fresh-attacker excess is at most .0014. The focused layer-19/20
subset gives fixed excess .022/.038 and fresh excess .0015/.0049. On all 21,290 balanced
100DOH images, more data raises top-50 Jaccard to .713 and the fixed effect to
.032, while fresh excess remains .0004. DINOv2 shows the same qualitative
fixed-versus-fresh separation, with a modest .0068 TouchMoment residual.

Channel 407 is top-50 in 20/20 layer-22 and 10/10 layer-23 splits, with median
ranks 22.5 and 15.5, but not at layers 18--21 or 24. Recurrence is not necessity:
masking 407 alone at layer 22 drops fixed AUROC by .00059 versus .00061 for a
matched random channel; at layer 23 its excess fixed drop is .00049. The most
frequently top-ranked channels at layers 21--24 are also individually weak; the
largest tested excess is .00119, and all fresh-attacker excesses are effectively
zero.

Finally, opposite-label donor patching provides a set-level sensitivity control.
On full TouchMoment at layers 18, 22, and 23, selected top-50 patches yield fixed AUROC
.686/.680/.699, versus .890/.863/.856 for random-channel patches and
.898/.852/.868 for same-label selected-channel patches. Focused layer 20 gives
.711/.882/.905, and full 100DOH gives .697/.852/.898. Donor selection is
label-informed, so retrained patch attackers
answer a new intervention-induced prediction problem; only fixed selected-versus-
control contrasts support the route claim.

The historical hard-zero experiments motivated this audit but are not needed for
its conclusion. The corrected result is narrower and stronger: channel 407 is a
reproducible member of a late-layer ensemble, while no tested individual axis is
necessary and ensemble information remains accessible to fresh linear attackers.
This analysis is axis-aligned in one fixed encoder basis and is not an
affine-invariant decomposition. Mean replacement acts on saved mean-pooled
representations rather than downstream network computation, and donor patching is
label-informed. Those restrictions are why the result remains supplementary.

\end{document}